\documentclass{article}

\usepackage{microtype}
\usepackage{graphicx}
\usepackage{subcaption}
\usepackage{booktabs}
\usepackage{hyperref}
\usepackage{enumitem}
\usepackage{multirow}

\usepackage[accepted]{icml2026}

\usepackage{amsmath}
\usepackage{amssymb}
\usepackage{mathtools}
\usepackage{amsthm}

\usepackage[capitalize,noabbrev]{cleveref}
\theoremstyle{plain}
\newtheorem{theorem}{Theorem}[section]
\newtheorem{proposition}[theorem]{Proposition}
\newtheorem{lemma}[theorem]{Lemma}
\newtheorem{corollary}[theorem]{Corollary}

\theoremstyle{definition}

\newtheorem{assumption}[theorem]{Assumption}

\theoremstyle{remark}

\icmltitlerunning{Where Rectified Flows Leak: Characterising Membership Signals Along the Interpolation Path}
\begin{document}

\twocolumn[
    \icmltitle{Where Rectified Flows Leak: \\
    Characterising Membership Signals Along the Interpolation Path}

  \icmlsetsymbol{equal}{*}

  \begin{icmlauthorlist}
    \icmlauthor{Thomas Sesmat}{equal,yyy}
    \icmlauthor{Gabriel Meseguer-Brocal}{comp}
    \icmlauthor{Geoffroy Peeters}{yyy}
  \end{icmlauthorlist}

  \icmlaffiliation{yyy}{LTCI, Télécom Paris, Institut Polytechnique de Paris, Palaiseau, France}
  \icmlaffiliation{comp}{Deezer Research, Paris, France}

  \icmlcorrespondingauthor{Thomas Sesmat}{thomas.sesmat@telecom-paris.fr}

        \icmlkeywords{Where Rectified Flows Leak: Characterising Membership Signals Along the Interpolation Path}

  \vskip 0.3in
]

\printAffiliationsAndNotice{}

\begin{abstract}    
Understanding memorization in generative models remains challenging, with implications for copyright and privacy. 
Beyond verbatim reproduction, models can encode subtler traces of their training data that never surface in their outputs yet remain exploitable. We refer to these measurable asymmetries as the \emph{membership signal}, and we study this regime for Rectified Flows (or Flow Matching), which are increasingly used in deployed generative systems.
We analyze the interpolation path $X_\lambda = (1-\lambda)X_0 + \lambda X_1$ that defines the Rectified Flow training. 
We show that a gap exists between the reconstruction of train and test data that follows a bell-shaped curve over $\lambda$, which accumulates during training, while the validation metrics remain stable.
The signal has a maximum whose location we derive in closed form under Gaussian assumptions.  
We validate these predictions on both audio and images and show that the bell-shaped structure is universal, while the peak prediction holds when our assumptions are satisfied. 
As a proof of concept, we exploit this specific $\lambda$-resolved structure to perform a Membership Inference Attack, distinguishing members of the training set from non-members.
\end{abstract}

\section{Introduction}
\label{sec:intro}

The deployment of generative models has raised legal concerns across multiple domains. Lawsuits have been filed over unauthorized use and direct reproduction of copyrighted photographs \citep{courtlistener2023lawsuit, somepalli2023diffusion}, text from news organizations and authors \citep{courtlistener2023lawsuit}, and music from major record labels \citep{riaa2024lawsuit, NewtonRexSuno}. Beyond verbatim reproduction lies a spectrum of subtler forms of memorization: a trained model may reconstruct training samples more accurately, respond more confidently near them, or otherwise treat them differently from held-out data, all without ever reproducing them. We refer to such measurable asymmetries as the \textbf{membership signal}. We study its structure in Rectified Flows/ Flow Matching \citep{liu2023flow, lipman2023flow}, which underlie widely deployed systems such as FLUX.1 \citep{blackforestlabs2024flux}, VoiceBox \citep{voicebox2023le}, and Stable Audio Open \citep{evans2024stable}. Our analysis focuses on the structural properties of the framework rather than on attacks against specific deployed models.

\begin{figure}[t]
    \centering
    \includegraphics[width=0.9\linewidth]{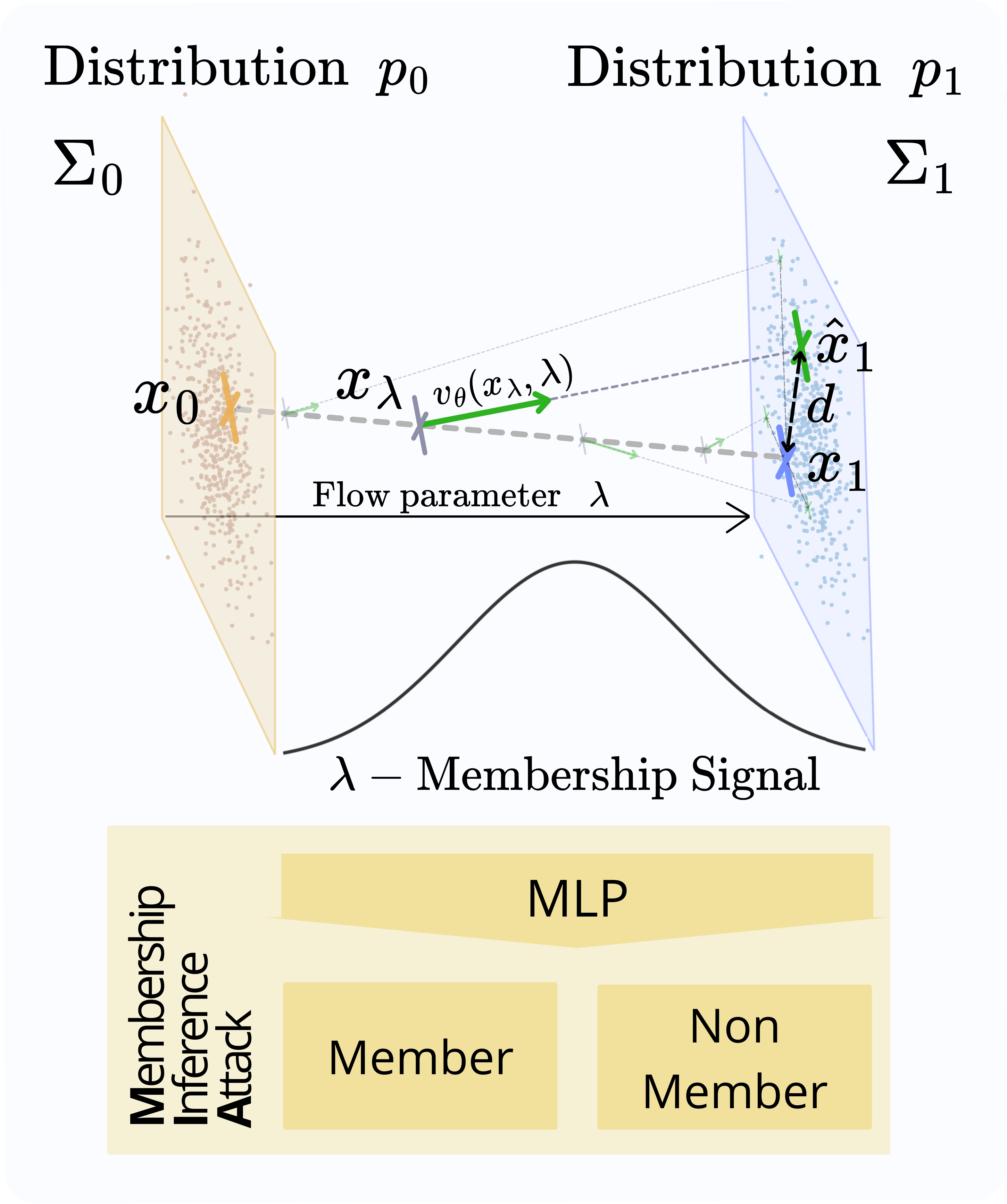}
    \caption{Overview of our approach. \textbf{Top:} Detection protocol, given a sample 
    $x_1$, we interpolate with noise $x_0$ at varying $\lambda$, predict the velocity 
    $v_\theta(x_\lambda, \lambda)$, and measure reconstruction error $d = \|x_1 - \hat{x}_1\|^2$. 
    \textbf{Middle:} The train-test gap in reconstruction error follows a bell-shaped curve 
    over $\lambda$; we derive a closed-form expression for the peak location. 
    \textbf{Bottom:} As a proof of concept, the $\lambda$-resolved errors can be fed to an MLP 
    classifier to perform Membership Inference Attack.}
    \label{fig:overview}
\end{figure}

Characterizing the membership signal is challenging because aggregate training metrics offer little guidance: a model can encode rich information about its training data while its loss curves show no sign of overfitting \citep{tirumala2022memorization}. Where, in the model's behavior, does this information reside? Existing studies of memorization in diffusion models suggest that intermediate timesteps carry most of the signal \citep{matsumoto2023membership}. A theoretical understanding of where and why the signal concentrates remains lacking, particularly for Rectified Flows, whose deterministic interpolation path differs from iterative denoising.

We propose to characterize the membership signal along the interpolation path $X_\lambda = (1-\lambda)X_0 + \lambda X_1$ that defines Rectified Flow training. This path offers a continuum of positions to analyze how the model treats training versus held-out data: at $\lambda=0$, the model observes pure noise; at $\lambda=1$, the data itself. The intermediate regime is where the model must leverage learned structure to predict the velocity and where membership signals emerge. We illustrate our approach in Figure~\ref{fig:overview}.

\textbf{Paper organization and contributions.}
After reviewing related works in Section~\ref{sec:related}, we show in Sections~\ref{sec:setup} and \ref{sec:theory} that the train–test reconstruction gap isolates a single term of the loss: the correlation between the model's error and the sample-specific residuals, which we identify as the membership signal.
We then characterize its shape: it follows a bell-shaped curve over $\lambda$ and is maximal where the linear information carried by $X_\lambda$ about the velocity vanishes. There, the model can no longer predict linearly and must fit a nonlinear structure that is statistically indistinguishable from sample-specific noise, so memorization is at its strongest.
We derive a closed-form expression for this peak location $\lambda_F^*$ as a function of the covariances $\Sigma_0$ and $\Sigma_1$ alone, making it independent of the architecture.
In Sections~\ref{sec:experiments} and \ref{sec:results}, we validate these predictions experimentally on various modalities (audio and image datasets), latent spaces, architectures, and noise configurations. We show that the bell-shaped structure is universal, that the peak location is governed by data geometry while model choices only affect its magnitude, and that the closed-form prediction holds when our Gaussian assumptions are satisfied.
Finally, in Section~\ref{sec:mia}, we demonstrate, as a proof of concept, that this $\lambda$-resolved structure is exploitable by a simple Membership Inference Attack (MIA, distinguishing training from held-out samples) on a piano music dataset. For reproducibility, our experimental code is available \href{https://github.com/sourisimos/rectified-flow-membership}{here}.

\section{Related Work}
\label{sec:related}

\paragraph{Rectified Flows.}
Rectified Flows \citep{liu2023flow} and Flow Matching \citep{lipman2023flow} learn velocity fields via regression on linearly interpolated samples $X_\lambda = (1-\lambda)X_0 + \lambda X_1$. Unlike diffusion models that require many denoising steps, Rectified Flows learn straighter paths between noise and data, enabling high-quality generation in fewer steps. This efficiency has driven adoption in major systems, including Stable Diffusion 3 \citep{esser2024scaling}, FLUX \citep{blackforestlabs2024flux}, and Stable Audio Open \citep{evans2024stable}. \citet{liu2023flow} also introduces a reflow procedure that further straightens trajectories by iterating training through the learned velocity field, replacing the independent coupling between $X_0$ and $X_1$ with a learned pairing.

\paragraph{memorization in generative models.}
The most studied form of memorization is verbatim reproduction, where models regenerate training samples exactly \citep{carlini2023extracting, somepalli2023diffusion}. For diffusion models, \citet{bonnaire2025diffusion} characterizes this phenomenon through two timescales: $\tau_{\mathrm{gen}}$ at which quality generation begins, and $\tau_{\mathrm{mem}}$ beyond which memorization emerges. \citet{gu2025memorization} systematically studies factors affecting such memorization, including dataset size, model capacity, and the surprising role of random labels. For Flow Matching, \citet{gao2024flow} derives analytical expressions for the optimal velocity field and analyzes memorization in sample data subspaces, while \citet{bertrand2025closedform} identifies distinct temporal phases in the generative process.

%However, 
\citet{ippolito2023preventing} argues that memorization exists on a spectrum of similarity to training data, ranging from exact reproduction to subtle statistical traces. Crucially, preventing verbatim reproduction does not eliminate the risk: models can still leak information through paraphrase, stylistic similarity, or structural patterns. Alternative definitions formalize this intuition, such as counterfactual memorization, which measures how predictions change when a specific sample is removed from training \citep{zhang2023counterfactual}.

At the subtle end of this spectrum lies train-test distinguishability: a model may produce novel samples while still encoding exploitable signals about its training data. We refer to this measurable asymmetry as the \textbf{membership signal}, and it is the form of memorization we study. 
It remains comparatively underexplored: \citet{tirumala2022memorization} shows that it can occur without visible overfitting on loss curves. \citet{feldman2020longtail} argues that some memorization is necessary for generalization on long-tailed distributions, suggesting it is not inherently undesirable but rather a phenomenon to understand.

\paragraph{Trajectory-dependent memorization signals}
The observation that memorization signals depend on position along the denoising trajectory is not new. \citet{matsumoto2023membership} report that intermediate timesteps are the most vulnerable to MIA, with success varying predictably across the denoising trajectory. Other MIA methods developed for diffusion models, such as SecMI \citep{duan2023diffusion} and PIA \citep{kong2023pia}, also leverage trajectory information, though they rely on the iterative denoising structure and do not transfer directly to Rectified Flows.
More broadly, \citet{shokri2017membership} formalized the MIA as a diagnostic for studying what models retain. Our work extends this trajectory perspective to Rectified Flows and grounds it theoretically: we derive why the membership signal peaks at a specific location $\lambda_F$ determined by data statistics, rather than discovering it empirically.

\section{Mathematical Setup}
\label{sec:setup}

We establish the framework for analyzing membership signals in latent Rectified Flows.
For readability, all proofs are deferred to Appendix \ref{app:proofs}.

\subsection{Distributions and Interpolation}

Let $X_0 \sim p_0 = \mathcal{N}(0, \Sigma_0)$ denote samples from a noise distribution and $X_1 \sim p_1$ denote latent representations of data, with covariance $\Sigma_1$. We assume $X_0 \perp \!\!\! \perp  X_1$, which holds by construction in Rectified Flow training without reflow \citep{liu2023flow}. Define:
\begin{align}
    X_\lambda &= (1-\lambda)X_0 + \lambda X_1 && \text{(interpolation)} \\
    V &= X_1 - X_0 && \text{(velocity)}
\end{align}

The optimal predictor is the conditional expectation:
\begin{equation}
    v^*(x, \lambda) \triangleq \mathbb{E}_{p_0 \times p_1}[V \mid X_\lambda = x]
\end{equation}
This is a deterministic function fully determined by $(p_0, p_1)$.

By the definition of conditional expectation, $\mathbb{E}_{p_0 \times p_1}[V - v^*(X_\lambda, \lambda) \mid X_\lambda] = 0$. This orthogonality property implies that for any measurable function $g: \mathbb{R}^d \to \mathbb{R}^d$:
\begin{equation}
    \mathbb{E}_{p_0 \times p_1}[\langle g(X_\lambda), V - v^*(X_\lambda, \lambda) \rangle] = 0
    \label{eq:orthogonality}
\end{equation}

    The irreducible variance is:
\begin{equation}
    \sigma_{\mathrm{irr}}^2(\lambda) \triangleq \mathbb{E}_{p_0 \times p_1}[\|V - v^*(X_\lambda, \lambda)\|^2]
\end{equation}
This quantity depends only on the distributions $(p_0, p_1)$ and represents a fundamental limit: since $v^*$ is the optimal predictor, no model can achieve a lower expected squared error, regardless of its capacity.

\subsection{Training and Test Sets}

Let $\mathcal{D}^{\mathrm{train}} = \{(x_0^{(i)}, x_1^{(i)})\}_{i=1}^n$ be a training set drawn i.i.d. from $p_0 \times p_1$. For each sample $i \in \{1, \ldots, n\}$, define:
\begin{align}
    v^{(i)} &\triangleq x_1^{(i)} - x_0^{(i)} \\
    x_\lambda^{(i)} &\triangleq (1-\lambda)x_0^{(i)} + \lambda x_1^{(i)} \\
    \epsilon_i(\lambda) &\triangleq v^{(i)} - v^*(x_\lambda^{(i)}, \lambda)
\end{align}
Once $\mathcal{D}^{\mathrm{train}}$ is drawn, these are fixed vectors in $\mathbb{R}^d$.

Let $\mathcal{D}^{\mathrm{test}} = \{(\tilde{x}_0^{(j)}, \tilde{x}_1^{(j)})\}_{j=1}^m$ be a test set drawn i.i.d. from $p_0 \times p_1$, independently of $\mathcal{D}^{\mathrm{train}}$. Define $\tilde{v}^{(j)}$, $\tilde{x}_\lambda^{(j)}$, 
and $\tilde{\epsilon}_j(\lambda)$ analogously.

A model $v_\theta: \mathbb{R}^d \times [0,1] \to \mathbb{R}^d$ is trained on $\mathcal{D}^{\mathrm{train}}$.  The parameter $\theta$ depends on both the training and the randomness in the training procedure (initialisation, batch ordering, etc.). In the following analysis, we condition on the trained model:  once $\theta$ is fixed, $v_\theta$ is a deterministic function.

\subsection{Loss Decomposition and the Membership Signal}
\label{sec:loss_decomp}

The training loss $L^{\mathrm{train}}(\lambda) = \frac{1}{n}\sum_{i=1}^n \|v_\theta(x_\lambda^{(i)}, \lambda) - v^{(i)}\|^2$ decomposes as:
\begin{equation}
    L^{\mathrm{train}}(\lambda) = E_n^{\mathrm{train}}(\lambda) + \hat{\sigma}_n^2(\lambda) - 2G_n^{\mathrm{train}}(\lambda)
    \label{eq:train_decomp}
\end{equation}
where $E_n^{\mathrm{train}} = \frac{1}{n}\sum_i \|v_\theta(x_\lambda^{(i)}, \lambda) - v^*(x_\lambda^{(i)}, \lambda)\|^2$ is the empirical approximation error, $\hat{\sigma}_n^2 = \frac{1}{n}\sum_i \|\epsilon_i(\lambda)\|^2$ the empirical irreducible variance, and:
\begin{equation}
    G_n^{\mathrm{train}}(\lambda) \triangleq \frac{1}{n}\sum_{i=1}^n \langle v_\theta(x_\lambda^{(i)}, \lambda) - v^*(x_\lambda^{(i)}, \lambda), \epsilon_i(\lambda) \rangle
\end{equation}
The test loss admits the same decomposition with analogous terms $E_m^{\mathrm{test}}$, $\hat{\sigma}_m^2$, and $G_m^{\mathrm{test}}$. The difference lies in the cross-correlation term:

\begin{proposition}[Train-test asymmetry]
\label{prop:asymmetry}
Conditioned on the training set $\mathcal{D}^{\mathrm{train}}$:
\begin{equation}
    \mathbb{E}_{\mathcal{D}^{\mathrm{test}}}[G_m^{\mathrm{test}}(\lambda) \mid \mathcal{D}^{\mathrm{train}}] = 0
\end{equation}
whereas $G_n^{\mathrm{train}}(\lambda)$ on training data is \textit{a priori} generically non-zero.
\end{proposition}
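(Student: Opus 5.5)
The plan is to apply the orthogonality property \eqref{eq:orthogonality} with a test function that freezes the trained model. Conditioning on $\mathcal{D}^{\mathrm{train}}$, and on the training randomness, makes $\theta$ fixed. So the map
\[
g_\theta(x) \triangleq v_\theta(x,\lambda) - v^*(x,\lambda)
\]
is a deterministic measurable function $\mathbb{R}^d \to \mathbb{R}^d$. Measurability holds because $v_\theta$ is a network, hence continuous, and $v^*$ is a conditional expectation.

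By construction, $\mathcal{D}^{\mathrm{test}}$ is drawn i.i.d.\ from $p_0 \times p_1$ independently of $\mathcal{D}^{\mathrm{train}}$ and of the training procedure. Hence, conditionally on $\mathcal{D}^{\mathrm{train}}$, each pair $(\tilde{x}_0^{(j)}, \tilde{x}_1^{(j)})$ still has law $p_0 \times p_1$.

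By linearity, the conditional expectation of $G_m^{\mathrm{test}}(\lambda)$ is the average over $j$ of
\[
\mathbb{E}\bigl[\langle g_\theta(\tilde{x}_\lambda^{(j)}), \tilde{v}^{(j)} - v^*(\tilde{x}_\lambda^{(j)},\lambda)\rangle \,\big|\, \mathcal{D}^{\mathrm{train}}\bigr].
\]
Each such term equals $\mathbb{E}_{p_0\times p_1}[\langle g_\theta(X_\lambda), V - v^*(X_\lambda,\lambda)\rangle]$. By \eqref{eq:orthogonality} applied with $g=g_\theta$, this is zero. Averaging the $m$ zeros gives the claim.

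The step that needs care is integrability. Orthogonality requires $\langle g_\theta(X_\lambda), \epsilon\rangle$ to be integrable. I would assume $\mathbb{E}\|v_\theta(X_\lambda,\lambda)\|^2<\infty$; this holds, for instance, if $v_\theta$ has at most linear growth and $p_1$ has finite second moments, which the existence of $\Sigma_1$ already implies. Under that assumption, Cauchy--Schwarz together with $\sigma_{\mathrm{irr}}^2(\lambda)<\infty$ gives integrability. The tower property then applies: $\mathbb{E}[\langle g_\theta(X_\lambda),\epsilon\rangle] = \mathbb{E}[\langle g_\theta(X_\lambda), \mathbb{E}[\epsilon\mid X_\lambda]\rangle] = 0$.

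For the training-side statement, the key observation is that the argument breaks because $\theta$ is a function of the very pairs $(x_0^{(i)}, x_1^{(i)})$. Consequently, $g_\theta$ evaluated at $x_\lambda^{(i)}$ is not independent of $\epsilon_i(\lambda)$, and the orthogonality identity no longer applies. Since the claim is only that $G_n^{\mathrm{train}}$ is \emph{generically} non-zero, I would support it with an illustrative example rather than a general theorem.

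The example is a model that interpolates the training targets, $v_\theta(x_\lambda^{(i)},\lambda) = v^{(i)}$. For this model, $g_\theta(x_\lambda^{(i)}) = \epsilon_i(\lambda)$, so
\[
G_n^{\mathrm{train}} = \hat{\sigma}_n^2(\lambda) > 0
\]
whenever $\sigma_{\mathrm{irr}}^2(\lambda)>0$. More generally, any training that reduces $L^{\mathrm{train}}$ below $E_n^{\mathrm{train}}+\hat\sigma_n^2$ must make $G_n^{\mathrm{train}}$ positive, as the decomposition \eqref{eq:train_decomp} shows.
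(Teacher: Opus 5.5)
Your proposal is correct and follows the paper's proof. Like the paper, you fix $v_\theta$ by conditioning on $\mathcal{D}^{\mathrm{train}}$, apply the orthogonality identity to $g = v_\theta - v^*$ at each independent test pair, average by linearity, and attribute the failure on training data to $v_\theta$ and the $\epsilon_i(\lambda)$ both being functions of $\mathcal{D}^{\mathrm{train}}$; your integrability check, the extra conditioning on training randomness, and the interpolating-model example add rigor and illustration the paper omits, but the route is the same.
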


To isolate $G_n^{\mathrm{train}}(\lambda)$ from other terms in the train-test gap, we introduce two assumptions.

\begin{assumption}[Uniform approximation error]
\label{ass:uniform}
The model's deviation from $v^*$ is the same on training points as on the population:
\begin{equation}
    E_n^{\mathrm{train}}(\lambda) = E^{\mathrm{pop}}(\lambda) 
    \triangleq \mathbb{E}_{p_0 \times p_1}[\|v_\theta(X_\lambda, \lambda) - v^*(X_\lambda, \lambda)\|^2]
\end{equation}

\end{assumption}

This holds when the model has not overfit in the classical sense, e.g., thanks to early stopping. Note that this does not preclude a train-test gap in the loss, which can arise through $G_n^{\mathrm{train}}(\lambda)$.

\begin{assumption}[Representative sample]
\label{ass:representative}
The empirical irreducible variance matches its population value:
\begin{equation}
    \hat{\sigma}_n^2(\lambda) = \sigma_{\mathrm{irr}}^2(\lambda)
\end{equation}
\end{assumption}

This holds by the law of large numbers for large $n$.

Under these assumptions, conditioning on the training set $\mathcal{D}^{\mathrm{train}}$ (and hence on the trained model $v_\theta$), the expected train-test gap over fresh test samples reduces to:
\begin{equation}
    \mathbb{E}_{\mathcal{D}^{\mathrm{test}}}[\Delta(\lambda) \mid \mathcal{D}^{\mathrm{train}}] = 2G_n^{\mathrm{train}}(\lambda)
    \label{eq:gap_simple}
\end{equation}

The quantity $G_n^{\mathrm{train}}(\lambda)$ is the \textbf{membership signal}: it measures the correlation between the model's deviation from $v^*$ and the training-specific residuals $\epsilon_i(\lambda)$.

\subsection{Covariance Structure}

Under $X_0 \perp \!\!\! \perp  X_1$, direct computation yields:
\begin{align}
    \Phi(\lambda) &\triangleq \mathrm{Cov}(X_\lambda) = (1-\lambda)^2 \Sigma_0 + \lambda^2 \Sigma_1 \\
    C(\lambda) &\triangleq \mathrm{Cov}(V, X_\lambda) = \lambda \Sigma_1 - (1-\lambda) \Sigma_0
\end{align}
The cross-covariance $C(\lambda)$ determines how strongly $X_\lambda$ predicts $V$ through linear regression: a large $\|C(\lambda)\|$ means a strong linear prediction.

When $(X_0, X_1)$ is jointly Gaussian, $v^*$ is linear:
\begin{equation}
    v^*(x, \lambda) = A(\lambda) x + b(\lambda)
\end{equation}
where $A(\lambda) = C(\lambda) \Phi(\lambda)^{-1}$ and $b(\lambda) = \mathbb{E}[V] - A(\lambda)\mathbb{E}[X_\lambda]$. For non-Gaussian $p_1$, $v^*$ may have a nonlinear component.

\section{Theoretical Analysis}
\label{sec:theory}

Having identified $G_n^{\mathrm{train}}(\lambda)$ as the membership signal, we now analyze its structure as a function of $\lambda$. We first identify a critical point where linear information is minimal (Section~\ref{sec:critical_point}), then  prove that the membership signal peaks there for Gaussian distributions (Section~\ref{sec:gaussian}), and finally extend heuristically to the general case (Section~\ref{sec:heuristic}).
As a reminder, for readability, all proofs are deferred to Appendix~\ref{app:proofs}.

\subsection{The Critical Point: Minimal Cross-Covariance}
\label{sec:critical_point}

We first identify a special value of $\lambda$ where the cross-covariance $C(\lambda)$ has a minimal norm.

\begin{proposition}[Critical point of cross-covariance]
\label{prop:C_min}
The squared Frobenius norm $\|C(\lambda)\|_F^2$ is a convex parabola in $\lambda$ with a unique minimum at:
\begin{equation}
    \lambda_F^* = \frac{\mathrm{tr}(\Sigma_0^2) + \mathrm{tr}(\Sigma_0\Sigma_1)}{\mathrm{tr}((\Sigma_0 + \Sigma_1)^2)}
\end{equation}
\end{proposition}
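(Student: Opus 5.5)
The plan is to expand $\|C(\lambda)\|_F^2$ as an explicit quadratic polynomial in $\lambda$ and then minimize it directly. First, rewrite the cross-covariance from the covariance-structure subsection as an affine matrix function of $\lambda$:
\begin{equation*}
C(\lambda) = \lambda \Sigma_1 - (1-\lambda)\Sigma_0 = \lambda(\Sigma_0+\Sigma_1) - \Sigma_0 .
\end{equation*}
Writing $S \triangleq \Sigma_0 + \Sigma_1$, the problem reduces to minimizing $\|\lambda S - \Sigma_0\|_F^2$ over $\lambda\in\mathbb{R}$. This is a one-dimensional least-squares projection of $\Sigma_0$ onto the line spanned by $S$ in the Frobenius inner product $\langle A,B\rangle_F = \mathrm{tr}(A^\top B)$.

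Second, expand the squared norm using the symmetry of $\Sigma_0$ and $\Sigma_1$:
\begin{equation*}
\|C(\lambda)\|_F^2 = \lambda^2\,\mathrm{tr}(S^2) - 2\lambda\,\mathrm{tr}(S\Sigma_0) + \mathrm{tr}(\Sigma_0^2).
\end{equation*}
Here $\mathrm{tr}(S\Sigma_0) = \mathrm{tr}(\Sigma_0^2) + \mathrm{tr}(\Sigma_1\Sigma_0)$, and $\mathrm{tr}(\Sigma_1\Sigma_0)=\mathrm{tr}(\Sigma_0\Sigma_1)$ by cyclicity of the trace. The leading coefficient is $\mathrm{tr}(S^2)=\|S\|_F^2$. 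Since $\Sigma_0$ is the covariance of a nondegenerate Gaussian, it is positive definite, and $\Sigma_1$ is positive semidefinite. Hence $S$ is positive definite and nonzero, so $\mathrm{tr}(S^2)>0$. This shows the function is a strictly convex parabola.

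Third, set the derivative $2\lambda\,\mathrm{tr}(S^2) - 2\,\mathrm{tr}(S\Sigma_0)$ to zero. Strict convexity makes the resulting stationary point the unique minimizer, namely
\begin{equation*}
\lambda_F^* = \frac{\mathrm{tr}(\Sigma_0^2)+\mathrm{tr}(\Sigma_0\Sigma_1)}{\mathrm{tr}((\Sigma_0+\Sigma_1)^2)},
\end{equation*}
which is the claimed formula.

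Optionally, one can also check that $\lambda_F^*\in(0,1]$. Positivity holds because $\mathrm{tr}(\Sigma_0\Sigma_1)\ge 0$ for PSD matrices and $\mathrm{tr}(\Sigma_0^2)>0$. The bound $\lambda_F^*\le 1$ holds because the numerator differs from the denominator by $-\mathrm{tr}(\Sigma_1^2)-\mathrm{tr}(\Sigma_0\Sigma_1)\le 0$. None of this is a real obstacle. The only point requiring care is the strict positivity of $\mathrm{tr}(S^2)$, which is what guarantees uniqueness, and this follows from the nondegeneracy of $\Sigma_0$.
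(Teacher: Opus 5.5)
Your proposal is correct and takes the same route as the paper: expand $\|C(\lambda)\|_F^2$ as the quadratic $\lambda^2\,\mathrm{tr}((\Sigma_0+\Sigma_1)^2) - 2\lambda\bigl(\mathrm{tr}(\Sigma_0^2)+\mathrm{tr}(\Sigma_0\Sigma_1)\bigr) + \mathrm{tr}(\Sigma_0^2)$ and read off its vertex. Your argument that the leading coefficient is strictly positive, which the paper only asserts, and your check that $\lambda_F^*\in(0,1]$ are small but useful additions.
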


Under isotropy ($\Sigma_0 = \sigma_0^2 I$, $\Sigma_1 = \sigma_1^2 I$), this minimum has a stronger interpretation: $C(\lambda_F^*) = 0$ exactly, so the optimal linear predictor $A(\lambda) = C(\lambda)\Phi(\lambda)^{-1}$ vanishes and $X_\lambda$ carries no linear information about $V$. In the general case, minimising $\|C(\lambda)\|_F$ does not guarantee $A(\lambda)$ is minimised, since $\Phi(\lambda)^{-1}$ also varies with $\lambda$.

\subsection{Gaussian Case: Peak at Minimal Linear Information}
\label{sec:gaussian}

For isotropic Gaussian distributions, we prove that $\mathbb{E}_{\mathcal{D}^{\mathrm{train}}}[G_n^{\mathrm{train}}(\lambda)]$ is maximised exactly at $\lambda_F^*$.

\begin{theorem}[Peak location for isotropic Gaussian]
\label{thm:peak}
Let $X_0 \sim \mathcal{N}(0, \sigma_0^2 I_d)$ and $X_1 \sim \mathcal{N}(0, \sigma_1^2 I_d)$ be independent in $\mathbb{R}^d$. For a linear model trained by ordinary least squares on $n > 2$ samples:

\begin{equation}
    \mathbb{E}_{\mathcal{D}^{\mathrm{train}}}[G_n^{\mathrm{train}}(\lambda)] = \sigma_{\mathrm{irr}}^2(\lambda) \cdot \frac{n-1}{n(n-2)}
\end{equation}

where $\sigma_{\mathrm{irr}}^2(\lambda) = d\bigl(\sigma_0^2 + \sigma_1^2 - c(\lambda)^2/\phi(\lambda)\bigr)$, with scalars $\phi(\lambda) = (1-\lambda)^2\sigma_0^2 + \lambda^2\sigma_1^2$ and $c(\lambda) = \lambda\sigma_1^2 - (1-\lambda)\sigma_0^2$.
\end{theorem}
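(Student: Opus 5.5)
The plan is to make every quantity in $G_n^{\mathrm{train}}$ explicit using Gaussian conditioning. Then $\mathbb{E}_{\mathcal{D}^{\mathrm{train}}}[G_n^{\mathrm{train}}]$ becomes the expectation of a quadratic form in the residuals, weighted by a hat-matrix-type quantity built from the inputs $x_\lambda^{(i)}$.

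First, since $(X_0,X_1)$ is jointly Gaussian with isotropic covariances, $v^*(x,\lambda)=a(\lambda)x$ with the scalar $a=c/\phi$. The residual $\epsilon=V-aX_\lambda$ is Gaussian, uncorrelated with $X_\lambda$, and hence \emph{independent} of it. Its covariance is $(\sigma_0^2+\sigma_1^2-c^2/\phi)I_d$, so its squared norm has expectation $\sigma_{\mathrm{irr}}^2(\lambda)$. By isotropy, the coordinates decouple into $d$ i.i.d. scalar problems, and $\sigma_{\mathrm{irr}}^2$ carries the factor $d$. I will work coordinatewise and multiply by $d$ at the end.

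Second, I write the OLS estimator in the form $\hat a-a=\frac{\sum_j x_j\epsilon_j}{\sum_j x_j^2}$, or its matrix analogue $(\hat A-A)=E^\top X(X^\top X)^{-1}$. Here I must pin down exactly what the model is: a shared scalar slope, per-coordinate slopes, or slope plus intercept. The constant $\frac{n-1}{n(n-2)}$ has to come out of that choice. Substituting gives
\[
G_n^{\mathrm{train}}=\frac1n\sum_i(\hat a-a)x_i\epsilon_i=\frac1n\,\epsilon^\top H\epsilon,
\]
where $H$ is the hat matrix of the fitted design. Conditioning on the inputs and using the independence of $\epsilon$ from $X_\lambda$ gives $\mathbb{E}[G\mid X]=\frac{\sigma_\epsilon^2}{n}\,\mathrm{tr}(H)$ times the appropriate normalisation.

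Third, I take the expectation over the inputs. If the estimator involves normalising by an empirical variance, or centring for an intercept, the factor $\frac{n-1}{n-2}$ should arise as an inverse chi-square moment, $\mathbb{E}[1/\chi^2_{k}]=1/(k-2)$ with $k=n-1$ after centring. This is also where the hypothesis $n>2$ enters: it makes that moment finite. Combining the pieces gives $\sigma_{\mathrm{irr}}^2(\lambda)\cdot\frac{n-1}{n(n-2)}$.

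The main obstacle is this last step, namely matching the precise estimator so that the trace term produces exactly $\frac{n-1}{n-2}$ rather than a plain $1$ or $d$. A naive no-intercept fit gives $\mathbb{E}[G\mid X]=\sigma_\epsilon^2/n$ exactly, with no inverse-moment factor. I would first try the centred (intercept) regression, where the sample variance appears as a $\chi^2_{n-1}$ in a denominator, and check that the independence of $\epsilon$ from $X_\lambda$ lets the expectation factor as required.

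Once the formula holds, the peak claim is immediate. The $n$-dependent factor does not depend on $\lambda$, so maximising the expected membership signal reduces to maximising $\sigma_{\mathrm{irr}}^2(\lambda)$, i.e. minimising $c(\lambda)^2/\phi(\lambda)$. This ratio vanishes exactly at $c(\lambda)=0$, which under isotropy is $\lambda_F^*$ of Proposition~\ref{prop:C_min}.
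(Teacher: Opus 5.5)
Your direct hat-matrix computation is correct. The gap is in the step you flag as the obstacle, and changing the estimator cannot close it, so the proposal never reaches the stated constant. For any OLS fit, $G_n^{\mathrm{train}} = \frac1n\epsilon^\top H\epsilon$, where $H$ is the hat matrix of the stacked regression. This uses the fact that $v^*$ lies in the model class. Since $\epsilon$ is independent of the inputs, $\mathbb{E}[G_n^{\mathrm{train}}\mid X] = \frac{\sigma_\epsilon^2}{n}\mathrm{tr}(H)$, and $\mathrm{tr}(H)$ is the number of fitted parameters, a deterministic integer. Concretely, in the per-coordinate no-intercept fit, the $1/S$ in $\hat a - a$ is cancelled exactly by $\mathbb{E}[(\sum_i x_i\epsilon_i)^2\mid X] = \sigma_\epsilon^2 S$. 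This gives $\mathbb{E}[G_n^{\mathrm{train}}] = \sigma_{\mathrm{irr}}^2(\lambda)/n$. The centred regression you want to try gives $2\sigma_{\mathrm{irr}}^2(\lambda)/n$ by the same computation. A $\chi^2_{n-1}$ moment would in any case give $1/(n-3)$, not $1/(n-2)$. No inverse chi-square moment survives in $\mathbb{E}[G_n^{\mathrm{train}}]$, and the hypothesis $n>2$ plays no role there.

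The factor $\frac{n-1}{n(n-2)}$ belongs to a different quantity. The paper computes per coordinate $\mathbb{E}[L_{\mathrm{train},j}] = \sigma_\epsilon^2\frac{n-1}{n}$ and $\mathbb{E}[L_{\mathrm{test},j}] = \sigma_\epsilon^2 + \phi\,\mathbb{E}[(\hat a - a)^2] = \sigma_\epsilon^2\frac{n-1}{n-2}$. This is where $\mathbb{E}[1/\chi^2_n] = 1/(n-2)$ and $n>2$ enter, giving $\frac12\mathbb{E}[\Delta_j] = \sigma_\epsilon^2\frac{n-1}{n(n-2)}$. It then equates $\frac12\mathbb{E}[\Delta]$ with $\mathbb{E}[G_n^{\mathrm{train}}]$, claiming that unbiasedness of $\hat a$ gives $\mathbb{E}[E_n^{\mathrm{train}}] = \mathbb{E}[E^{\mathrm{pop}}]$. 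Your calculation shows this identification fails. We have $\mathbb{E}[E_n^{\mathrm{train}}] = \mathbb{E}[(\hat a - a)^2 S/n] = \sigma_\epsilon^2/n$, whereas $\mathbb{E}[E^{\mathrm{pop}}] = \phi\,\mathbb{E}[(\hat a - a)^2] = \sigma_\epsilon^2/(n-2)$, because $(\hat a - a)^2$ is conditionally proportional to $1/S$. Their difference, $\frac{2\sigma_\epsilon^2}{n(n-2)}$, is exactly $\mathbb{E}[\Delta_j] - 2\sigma_\epsilon^2/n$. So with $G_n^{\mathrm{train}}$ as defined in the paper, the displayed identity does not hold. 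For example, at $n=3$ it claims $\frac23\sigma_{\mathrm{irr}}^2$ against the true value $\frac13\sigma_{\mathrm{irr}}^2$. What is provable is $\mathbb{E}[G_n^{\mathrm{train}}] = \sigma_{\mathrm{irr}}^2(\lambda)/n$, while $\sigma_{\mathrm{irr}}^2(\lambda)\frac{n-1}{n(n-2)}$ is half the expected train--test loss gap. Rather than forcing the constant out of $G$, state explicitly which of these two quantities you compute. Both are $\sigma_{\mathrm{irr}}^2(\lambda)$ times a positive $\lambda$-independent factor, so your closing argument for the peak at $c(\lambda)=0$ is unaffected.
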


\begin{corollary}[Peak at minimal linear information]
\label{cor:peak_location}
Under the assumptions of Theorem \ref{thm:peak}, $\mathbb{E}_{\mathcal{D}^{\mathrm{train}}}[G_n^{\mathrm{train}}(\lambda)]$ is uniquely maximised at:
\begin{equation}
    \lambda^* = \frac{\sigma_0^2}{\sigma_0^2 + \sigma_1^2}
\end{equation}
This coincides with $\lambda_F^*$ from Proposition \ref{prop:C_min} in the isotropic case.
\end{corollary}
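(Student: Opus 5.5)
By Theorem~\ref{thm:peak}, the factor $\frac{n-1}{n(n-2)}$ is a positive constant independent of $\lambda$ (since $n>2$). Maximising $\mathbb{E}_{\mathcal{D}^{\mathrm{train}}}[G_n^{\mathrm{train}}(\lambda)]$ over $\lambda\in[0,1]$ therefore reduces to maximising $\sigma_{\mathrm{irr}}^2(\lambda) = d\bigl(\sigma_0^2+\sigma_1^2 - c(\lambda)^2/\phi(\lambda)\bigr)$. Equivalently, we minimise the ratio $r(\lambda) \triangleq c(\lambda)^2/\phi(\lambda)$.

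The plan rests on two observations. First, $\phi(\lambda) = (1-\lambda)^2\sigma_0^2+\lambda^2\sigma_1^2 > 0$ for every $\lambda\in[0,1]$, because it is a sum of squares with positive weights that never vanish simultaneously. Hence $r(\lambda)\ge 0$ on the whole interval. Second, $c(\lambda) = \lambda\sigma_1^2-(1-\lambda)\sigma_0^2 = \lambda(\sigma_0^2+\sigma_1^2) - \sigma_0^2$ is affine in $\lambda$ with positive slope. It therefore has exactly one root, $\lambda^* = \sigma_0^2/(\sigma_0^2+\sigma_1^2)$, which lies in $(0,1)$. At $\lambda^*$ we get $r(\lambda^*)=0$, the global minimum of $r$. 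For $\lambda\neq\lambda^*$ we have $c(\lambda)\neq 0$ and $\phi(\lambda)<\infty$, so $r(\lambda)>0$.

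It follows that $\sigma_{\mathrm{irr}}^2$, and with it the expected membership signal, attains its unique maximum $d(\sigma_0^2+\sigma_1^2)$ at $\lambda^*$. No derivative computation is needed, which sidesteps the messy quotient-rule algebra. For the coincidence with Proposition~\ref{prop:C_min}, substitute $\Sigma_0=\sigma_0^2 I_d$ and $\Sigma_1=\sigma_1^2 I_d$ into $\lambda_F^*$. The numerator becomes $d(\sigma_0^4+\sigma_0^2\sigma_1^2) = d\sigma_0^2(\sigma_0^2+\sigma_1^2)$ and the denominator becomes $d(\sigma_0^2+\sigma_1^2)^2$. The ratio is $\sigma_0^2/(\sigma_0^2+\sigma_1^2)=\lambda^*$. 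This also agrees with the remark that $C(\lambda_F^*)=c(\lambda^*)I_d=0$.

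There is no real obstacle here. The only point requiring care is the uniqueness claim, which depends on $\phi$ staying strictly positive and finite on $[0,1]$, so that $r$ vanishes only where $c$ does. We also rely on Theorem~\ref{thm:peak} holding uniformly in $\lambda$ with a $\lambda$-independent positive prefactor, which we take as given.
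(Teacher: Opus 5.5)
Your proof is correct and matches the paper's argument: you drop the positive $\lambda$-independent prefactor, use $\phi(\lambda)>0$ to see that $\sigma_{\mathrm{irr}}^2$ is maximised exactly where the affine function $c(\lambda)$ vanishes, and then check agreement with $\lambda_F^*$. The only differences are cosmetic. You argue uniqueness more explicitly than the paper, which just solves $c(\lambda)=0$, and you verify the coincidence by substituting $\Sigma_0=\sigma_0^2 I_d$, $\Sigma_1=\sigma_1^2 I_d$ into the trace formula, whereas the paper uses $\|C(\lambda)\|_F^2 = d\,c(\lambda)^2$.
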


\begin{corollary}[Boundary behavior]
\label{cor:boundary}
Under the assumptions of Theorem \ref{thm:peak}, $\mathbb{E}_{\mathcal{D}^{\mathrm{train}}}[G_n^{\mathrm{train}}(\lambda)]$ is minimised at $\lambda \in \{0, 1\}$.
\end{corollary}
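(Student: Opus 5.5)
By Theorem~\ref{thm:peak}, $\mathbb{E}_{\mathcal{D}^{\mathrm{train}}}[G_n^{\mathrm{train}}(\lambda)]$ equals $\sigma_{\mathrm{irr}}^2(\lambda)$ times the factor $\frac{n-1}{n(n-2)}$. This factor is positive for $n>2$ and does not depend on $\lambda$, so it suffices to show that $\sigma_{\mathrm{irr}}^2(\lambda)=d\bigl(\sigma_0^2+\sigma_1^2-c(\lambda)^2/\phi(\lambda)\bigr)$ attains its minimum over $[0,1]$ at $\lambda\in\{0,1\}$. Equivalently, we must show that $r(\lambda)\triangleq c(\lambda)^2/\phi(\lambda)$ is maximised at the endpoints.

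First evaluate the endpoints. At $\lambda=0$ we have $c=-\sigma_0^2$ and $\phi=\sigma_0^2$, so $r(0)=\sigma_0^2$. At $\lambda=1$ we have $c=\sigma_1^2$ and $\phi=\sigma_1^2$, so $r(1)=\sigma_1^2$. In both cases $\sigma_{\mathrm{irr}}^2=d\,\sigma_{1}^2$ (resp.\ $d\,\sigma_0^2$). This matches the intuition that $X_\lambda$ reveals one of the two endpoints exactly, so the only residual randomness is the other endpoint.

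Next bound $r$ in the interior. Rather than differentiating the ratio, we use the closed form of the residual variance. The identity $(\sigma_0^2+\sigma_1^2)\phi(\lambda)-c(\lambda)^2=\sigma_0^2\sigma_1^2\bigl((1-\lambda)+\lambda\bigr)^2=\sigma_0^2\sigma_1^2$ follows by expanding both sides. It gives
\begin{equation}
\sigma_{\mathrm{irr}}^2(\lambda)=\frac{d\,\sigma_0^2\sigma_1^2}{\phi(\lambda)}.
\end{equation}
The problem therefore reduces to maximising $\phi(\lambda)=(1-\lambda)^2\sigma_0^2+\lambda^2\sigma_1^2$ on $[0,1]$. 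This $\phi$ is a convex quadratic, so its maximum over the interval is attained at an endpoint, namely at $\max(\phi(0),\phi(1))=\max(\sigma_0^2,\sigma_1^2)$. Its minimum is at the interior point $\sigma_0^2/(\sigma_0^2+\sigma_1^2)$, which is consistent with Corollary~\ref{cor:peak_location}.

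The only subtle point is interpreting ``minimised at $\lambda\in\{0,1\}$''. The two endpoint values are $d\sigma_1^2$ and $d\sigma_0^2$, and these coincide only when $\sigma_0=\sigma_1$. The statement should therefore be read as: the minimum over $[0,1]$ is attained at an endpoint, namely the endpoint corresponding to the larger of $\sigma_0^2,\sigma_1^2$. Both endpoints are minimisers exactly in the equal-variance case.

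Besides the algebraic identity above, the remaining step is to confirm that there are no interior minimisers. Strict convexity of $\phi$ gives this directly: a strictly convex function restricted to $[0,1]$ satisfies $\phi(\lambda)<\max(\phi(0),\phi(1))$ for every $\lambda\in(0,1)$. Multiplying through by the positive constant $\frac{n-1}{n(n-2)}$ then yields the corollary.
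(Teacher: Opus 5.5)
Your proof is correct, and it is more complete than the paper's. The paper evaluates $c(\lambda)^2/\phi(\lambda)$ at only three points: $\sigma_0^2$ at $\lambda=0$, $\sigma_1^2$ at $\lambda=1$, and $0$ at $\lambda^*$. It then asserts that the ratio is maximised at the boundaries. A three-point comparison does not rule out an interior $\lambda$ where the ratio exceeds $\max(\sigma_0^2,\sigma_1^2)$.

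Your identity $(\sigma_0^2+\sigma_1^2)\phi(\lambda)-c(\lambda)^2=\sigma_0^2\sigma_1^2$ closes exactly that gap. It expresses the fact that the map $(X_0,X_1)\mapsto(X_\lambda,V)$ has unit determinant, so $\sigma_\epsilon^2=\mathrm{Var}(V\mid X_\lambda)=\sigma_0^2\sigma_1^2/\phi(\lambda)$. The claim then reduces to maximising a strictly convex quadratic on $[0,1]$. As a bonus, it shows $\sigma_{\mathrm{irr}}^2$ is increasing on $[0,\lambda^*]$ and decreasing on $[\lambda^*,1]$. That gives the bell shape and Corollary~\ref{cor:peak_location} at the same time.

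Your caveat about how to read ``minimised at $\lambda\in\{0,1\}$'' is a sharpening the paper glosses over. The global minimiser is only the endpoint carrying the larger of $\sigma_0^2,\sigma_1^2$. In general each endpoint is only a one-sided local minimum, and both are global minimisers only when $\sigma_0=\sigma_1$.

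The paper's version is shorter and yields the concrete endpoint values used in Corollary~\ref{cor:boundary_values}. Yours proves the claim over the whole interval and explains structurally why the residual variance is inversely proportional to $\mathrm{Var}(X_\lambda)$.
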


\begin{corollary}[Asymptotics behavior]
\label{cor:asymptotics}
For large $n$:
\begin{equation}
    \mathbb{E}[G_n^{\mathrm{train}}(\lambda)] \approx \frac{\sigma^2_{\mathrm{irr}}(\lambda)}{n}
\end{equation}
\end{corollary}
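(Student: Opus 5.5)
The statement to prove is Corollary~\ref{cor:asymptotics}, and it follows directly from the exact formula in Theorem~\ref{thm:peak}. That theorem gives, for every $n>2$,
\begin{equation*}
\mathbb{E}_{\mathcal{D}^{\mathrm{train}}}[G_n^{\mathrm{train}}(\lambda)] = \sigma_{\mathrm{irr}}^2(\lambda)\cdot \frac{n-1}{n(n-2)} .
\end{equation*}
Everything therefore reduces to studying the scalar factor $r_n \triangleq \frac{n-1}{n(n-2)}$. The key point is that $\sigma_{\mathrm{irr}}^2(\lambda) = d\bigl(\sigma_0^2+\sigma_1^2 - c(\lambda)^2/\phi(\lambda)\bigr)$ does not depend on $n$, so it passes through the limit unchanged.

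\textbf{Step 1.} I would show that $n\, r_n \to 1$. Writing
\begin{equation*}
n\, r_n = \frac{n-1}{n-2} = 1 + \frac{1}{n-2},
\end{equation*}
we get $r_n = \frac{1}{n} + \frac{1}{n(n-2)} = \frac{1}{n} + O(n^{-2})$.

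\textbf{Step 2.} Multiplying by $\sigma_{\mathrm{irr}}^2(\lambda)$ gives
\begin{equation*}
\mathbb{E}[G_n^{\mathrm{train}}(\lambda)] = \frac{\sigma_{\mathrm{irr}}^2(\lambda)}{n} + \frac{\sigma_{\mathrm{irr}}^2(\lambda)}{n(n-2)},
\end{equation*}
so the ratio $\mathbb{E}[G_n^{\mathrm{train}}(\lambda)] \big/ \bigl(\sigma_{\mathrm{irr}}^2(\lambda)/n\bigr)$ tends to $1$. This is the precise meaning of ``$\approx$'' in the statement.

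\textbf{Step 3.} I would note that the approximation is uniform in $\lambda\in[0,1]$. Since $\phi(\lambda)>0$ on $[0,1]$ whenever $\sigma_0,\sigma_1>0$, the function $\sigma_{\mathrm{irr}}^2$ is continuous there. It is also bounded above by $d(\sigma_0^2+\sigma_1^2)$, because $c(\lambda)^2/\phi(\lambda)\ge 0$. Hence the absolute error term $\sigma_{\mathrm{irr}}^2(\lambda)/(n(n-2))$ is at most $d(\sigma_0^2+\sigma_1^2)/(n(n-2))$ for every $\lambda$. A consequence is that, for large $n$, the shape in $\lambda$ and the peak location of Corollary~\ref{cor:peak_location} are inherited from $\sigma_{\mathrm{irr}}^2$, and only the amplitude decays like $1/n$.

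There is no real obstacle here: all the substantive work, namely the exact OLS computation, lives in Theorem~\ref{thm:peak}, which we take as given. The only point needing care is to state clearly that ``$\approx$'' means asymptotic equivalence with relative error $\frac{1}{n-2}$.
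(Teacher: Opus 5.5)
Your argument is correct and follows the paper's route. The appendix gives no separate proof; it reads the corollary off Theorem~\ref{thm:peak} via $\frac{n-1}{n(n-2)} = \frac{1}{n} + \frac{1}{n(n-2)}$, exactly as you do. As a side remark, computing directly for the per-coordinate OLS fit gives $G_{n,j}^{\mathrm{train}} = \frac{1}{nS}\bigl(\sum_i x_{\lambda,j}^{(i)}\epsilon_j^{(i)}(\lambda)\bigr)^2$, whose conditional mean is exactly $\sigma_\epsilon^2(\lambda)/n$, so the corollary actually holds with equality and the factor $\frac{n-1}{n(n-2)}$ (hence your relative error $\frac{1}{n-2}$) really describes $\frac{1}{2}\mathbb{E}[\Delta(\lambda)]$ --- the appendix's step $\mathbb{E}[E_n^{\mathrm{train}}] = \mathbb{E}[E^{\mathrm{test}}]$ fails, the two being $\sigma_\epsilon^2(\lambda)/n$ and $\sigma_\epsilon^2(\lambda)/(n-2)$ per coordinate.
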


\subsection{General Case: Heuristic Extension}
\label{sec:heuristic}

For non-Gaussian distributions and nonlinear models, we provide heuristic arguments that we validate empirically in Section~\ref{sec:results}.

\paragraph{Decomposition of the learning target.}
For general distributions, the optimal predictor may have a nonlinear component:
\begin{equation}
    v^*(x, \lambda) = A(\lambda)x + b(\lambda) + r(x, \lambda)
\end{equation}
where $r(x, \lambda) \triangleq v^*(x, \lambda) - A(\lambda)x - b(\lambda)$ captures the deviation from the best linear approximation. For Gaussian distributions, $r \equiv 0$.

For a training sample $i$, the target velocity becomes:
\begin{equation}
    v^{(i)} = \underbrace{A(\lambda)x_\lambda^{(i)} + b(\lambda)}_{\text{linear signal}} 
    + \underbrace{r(x_\lambda^{(i)}, \lambda) + \epsilon_i(\lambda)}_{\eta_i(\lambda): \text{ nonlinear target}}
    \label{eq:target_decomp}
\end{equation}
The linear signal generalises to held-out data. The nonlinear target $\eta_i(\lambda)$ combines the population nonlinearity $r$ (which generalises) with the sample-specific residual $\epsilon_i$ (which does not).

\paragraph{The competition mechanism.}
From the perspective of gradient descent, the model cannot distinguish between $r(x_\lambda^{(i)}, \lambda)$ and $\epsilon_i(\lambda)$. This indistinguishability follows from their shared statistical structure:

\begin{proposition}[Shared statistics of $r$ and $\epsilon$]
\label{prop:statistics}
For $(X_0, X_1) \sim p_0 \times p_1$:
\begin{align}
    \mathbb{E}_{p_0 \times p_1}[r(X_\lambda, \lambda)] &= 0 \\
    \mathrm{Cov}_{p_0 \times p_1}(r(X_\lambda, \lambda), X_\lambda) &= 0 \\
    \mathbb{E}_{p_0 \times p_1}[\epsilon(\lambda)] &= 0 \\
    \mathrm{Cov}_{p_0 \times p_1}(\epsilon(\lambda), X_\lambda) &= 0
\end{align}
\end{proposition}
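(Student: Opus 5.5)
The plan is to derive all four identities from two facts: the orthogonality property \eqref{eq:orthogonality} of the conditional expectation, and the defining property of $A(\lambda)$, $b(\lambda)$ as the best linear (affine) predictor of $V$ from $X_\lambda$. Throughout, $\epsilon(\lambda) \triangleq V - v^*(X_\lambda,\lambda)$ denotes the population residual, and expectations are under $p_0\times p_1$. We assume $\Phi(\lambda)$ is invertible, as is implicit in the definition $A(\lambda) = C(\lambda)\Phi(\lambda)^{-1}$.

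We treat $\epsilon$ first. Applying \eqref{eq:orthogonality} with the constant function $g \equiv e_k$, for each standard basis vector $e_k$, gives $\mathbb{E}[\epsilon(\lambda)] = 0$. Taking instead $g(x) = x_l e_k$ gives $\mathbb{E}[\epsilon_k (X_\lambda)_l] = 0$ for all $k,l$, so $\mathbb{E}[\epsilon X_\lambda^\top]=0$. Combined with the zero mean, this yields $\mathrm{Cov}(\epsilon, X_\lambda) = \mathbb{E}[\epsilon X_\lambda^\top] - \mathbb{E}[\epsilon]\mathbb{E}[X_\lambda]^\top = 0$. These choices of $g$ are measurable and square-integrable, since $X_\lambda$ has finite second moments.

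Next we treat $r$, writing $r = v^* - A X_\lambda - b = (V - A X_\lambda - b) - \epsilon$. For the mean, $\mathbb{E}[v^*(X_\lambda)] = \mathbb{E}[V]$ by the tower property, and $b = \mathbb{E}[V] - A\,\mathbb{E}[X_\lambda]$ by definition, so $\mathbb{E}[r] = \mathbb{E}[V] - A\mathbb{E}[X_\lambda] - b = 0$. For the covariance, bilinearity gives
\begin{equation*}
\mathrm{Cov}(r, X_\lambda) = \mathrm{Cov}(v^*(X_\lambda), X_\lambda) - A(\lambda)\Phi(\lambda).
\end{equation*}
From the $\epsilon$ step, $\mathrm{Cov}(v^*(X_\lambda), X_\lambda) = \mathrm{Cov}(V,X_\lambda) - \mathrm{Cov}(\epsilon, X_\lambda) = C(\lambda)$. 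Since $A(\lambda)\Phi(\lambda) = C(\lambda)$, the covariance is zero.

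The main subtlety is conceptual rather than computational. $A(\lambda)$ and $b(\lambda)$ were introduced in the Gaussian paragraph, but for general $p_1$ we must interpret them as the coefficients of the population least-squares projection of $V$ onto affine functions of $X_\lambda$, using the same formulas $A = C\Phi^{-1}$ and $b = \mathbb{E}V - A\mathbb{E}X_\lambda$. Everything else is routine. This also explains why $r$ and $\epsilon$ share first and second-order statistics with respect to $X_\lambda$: both are orthogonal to all affine functions of $X_\lambda$.
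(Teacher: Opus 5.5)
Your proof is correct. For $\epsilon$ it is the paper's argument: zero conditional mean, which you instantiate through \eqref{eq:orthogonality} with $g\equiv e_k$ and $g(x)=x_l e_k$.

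For $r$ you take a slightly different route. The paper declares $(A(\lambda),b(\lambda))$ to be the minimisers of $\mathbb{E}[\|v^*(X_\lambda,\lambda)-AX_\lambda-b\|^2]$ and reads $\mathbb{E}[r]=0$ and $\mathbb{E}[rX_\lambda^\top]=0$ off the first-order conditions. You instead start from the explicit formulas $A=C\Phi^{-1}$ and $b=\mathbb{E}[V]-A\,\mathbb{E}[X_\lambda]$ of Section~\ref{sec:setup}. These are the affine regression coefficients of $V$, not of $v^*$. You then verify the identities by direct computation, using the tower property and the already-established $\mathrm{Cov}(\epsilon,X_\lambda)=0$ to obtain $\mathbb{E}[v^*(X_\lambda,\lambda)]=\mathbb{E}[V]$ and $\mathrm{Cov}(v^*(X_\lambda,\lambda),X_\lambda)=C(\lambda)$.

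What your route buys is a proof of the identification the paper uses silently: regressing $V$ or $v^*$ affinely on $X_\lambda$ yields the same coefficients. So the $A(\lambda)$ and $b(\lambda)$ defined earlier really are the minimisers invoked in the paper's proof. The paper's variational argument is shorter and does not require $\Phi(\lambda)$ to be invertible, since any minimiser satisfies the normal equations.
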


Since $\mathcal{D}_{\mathrm{train}}$ is drawn i.i.d.\ from $p_0 \times p_1$, the law of large numbers implies:
\begin{equation}
    \frac{1}{n}\sum_{i=1}^n \eta_i(\lambda) \xrightarrow{n \to \infty} 0, \quad
    \frac{1}{n}\sum_{i=1}^n \eta_i(\lambda) (x_\lambda^{(i)})^\top \xrightarrow{n \to \infty} 0
\end{equation}
A model observing only $\{(x_\lambda^{(i)}, v^{(i)})\}_{i=1}^n$ cannot distinguish, based on first and second-order statistics, which part of $\eta_i$ will generalise. The gradient pushes the model to explain $\eta_i = r + \epsilon_i$ jointly, inevitably fitting some of the sample-specific component $\epsilon_i$.

\paragraph{Role of the linear signal.}
When $\|C(\lambda)\|$ is large, the linear signal dominates, and by spectral bias \citep{rahaman2019spectral}, it is learned first. The nonlinear target $\eta_i$ contributes little to the loss, keeping $G_n^{\mathrm{train}}(\lambda)$ low.

Near $\lambda_F^*$, where $\|C(\lambda)\|$ is minimised (Proposition~\ref{prop:C_min}), the linear signal vanishes ($A(\lambda) \approx 0$). The model must explain the entirety of $\eta_i$ using nonlinear features. Competition between learning $r$ and fitting $\epsilon_i$ is maximal, and $G_n^{\mathrm{train}}(\lambda)$ peaks.

\subsection{Why Standard Metrics Miss the Membership Signal}
\label{sec:hidden_signal}

The train-test gap $\Delta(\lambda) \approx 2G_n^{\mathrm{train}}(\lambda)$ provides a membership signal at each $\lambda$. Yet standard training protocols fail to detect it due to two masking mechanisms.

\paragraph{Spatial averaging.}
Standard training monitors losses averaged over $\lambda \sim p(\lambda)$:
\begin{equation}
    L_{\mathrm{global}} = \mathbb{E}_{\lambda \sim p(\lambda)}[L(\lambda)]
\end{equation}
If $G_n^{\mathrm{train}}(\lambda)$ concentrates near $\lambda^*_F$ while $p(\lambda)$ spreads over $[0,1]$, the signal is diluted.

\paragraph{Temporal compensation.}
On the training data, the loss decomposes as $L_{\mathrm{train}}(\lambda) = E_n^{\mathrm{train}}(\lambda) + \hat{\sigma}_n^2(\lambda) - 2G_n(\lambda)$. As training progresses, $E_n^{\mathrm{train}}(\lambda)$ decreases while $G_n(\lambda)$ increases as the model fits training-specific residuals; both effects reduce $L_{\mathrm{train}}$, making them indistinguishable.

On validation data, under Assumption~\ref{ass:uniform}, $E^{\mathrm{test}}(\lambda)$ decreases in tandem while $G^{\mathrm{test}}(\lambda) \approx 0$, so validation loss also decreases. The membership signal thus accumulates while both losses improve, leaving the model vulnerable to membership inference at early stopping despite no visible overfitting.

\subsection{Why the Assumptions Hold in Practice}
\label{sec:assumptions_practice}

The closed-form prediction $\lambda_F^*$ relies on Gaussian isotropic assumptions. We argue that these are reasonable approximations for latent diffusion models. We discuss here why these are reasonable approximations for latent diffusion models,  how their validity can be characterized empirically, and what alternatives exist when they fail.

\paragraph{Approximate Gaussianity.}
Latent spaces are designed with constrained statistics: VAEs regularize toward a Gaussian prior via KL divergence \citep{kingma2014auto}, while encoders like Music2Latent \citep{pasini2024music2latent} bind activations via tanh. By the maximum entropy principle \citep{jaynes1957information}, bounded latent spaces with fixed covariance tend toward Gaussian distributions.

\paragraph{Approximate isotropy.}
KL-regularized VAEs explicitly penalize deviation from $\mathcal{N}(0, I)$ \citep{kingma2014auto}. For other encoders, architectural choices produce similar effects: batch normalization \citep{ioffe2015batch} standardizes activations, and symmetric bounded activations like tanh discourage correlations. More generally, independent per-dimension processing tends toward approximately diagonal covariance.

\paragraph{Dominant linear structure.}
Even when the nonlinear residual $r$ is non-zero, neural networks learn low-frequency (linear) components first due to spectral bias \citep{rahaman2019spectral}. \citet{xu2019frequency} formalizes this as the Frequency Principle: networks fit target functions from low to high frequencies during training. The location where $A(\lambda)$ vanishes determines where the model must rely on higher-order structure.

\paragraph{Continuity of $\lambda_F^*$.}
The formula for $\lambda_F^*$ depends continuously on $\Sigma_0$ and $\Sigma_1$. Small deviations from exact Gaussianity or isotropy produce correspondingly small deviations in the peak location, suggesting robustness to moderate violations.

\section{Experimental Protocol}
\label{sec:experiments}

Having established that the train-test gap follows a bell-shaped curve over $\lambda$, peaking at $\lambda_F^*$, we now design a protocol to validate these predictions and demonstrate their exploitation for membership inference.

\subsection{Detection Protocol}

Given a trained model $v_\theta$ and a sample $x_1$, we measure the reconstruction quality at each $\lambda$:
\begin{enumerate}
    \item \textbf{Interpolate}: Sample $x_0 \sim p_0 = \mathcal{N}(0, \Sigma_0)$, compute $x_\lambda = (1-\lambda)x_0 + \lambda x_1$
    \item \textbf{Predict}: Compute $v_\theta(x_\lambda, \lambda)$
    \item \textbf{Reconstruct}: $\hat{x}_1 = x_\lambda + (1-\lambda)v_\theta(x_\lambda, \lambda)$
    \item \textbf{Measure}: $\mathrm{MSE}(\lambda) = \|x_1 - \hat{x}_1\|^2$
\end{enumerate}
For each data point $x_1$, we sample $K = 100$ different noise samples $x_0$ and average the MSE over them. 
We vary $\lambda \in \{0, 0.1, \ldots, 1.0\}$. The procedure is depicted in Figure \ref{fig:overview}.

\subsection{Experimental Setup}

We first establish a baseline configuration on audio, then systematically vary each component to test robustness. In the following, we present our baseline, and full details on all datasets, encoders, and architectures are in Appendix~\ref{app:ablations}.

\paragraph{Baseline configuration.}
Our primary experiments use MAESTRO~v3 \citep{hawthorne2019enabling}, a dataset of $\sim$200 hours of classical piano, where the official split ensures no composition appears in multiple subsets (satisfying Assumption~\ref{ass:representative}). Audio is encoded via Music2Latent \citep{pasini2024music2latent}, a pretrained autoencoder mapping to 64-channel latents at 10\,Hz. We train a Transformer (410M parameters) adapted from DiT \citep{peebles2023scalablediffusionmodelstransformers} with AdamW (lr $10^{-4}$, batch size 256), log normal $\lambda$-sampling \citep{esser2024scaling}, and early stopping at the validation plateau.

\paragraph{Evaluation.}
We compute reconstruction MSE on 5,000 training and 5,000 held-out samples, using $K=100$ noise realizations per sample. The reconstruction error satisfies $\mathrm{MSE}(\lambda) = (1-\lambda)^2 \|v_\theta - v\|^2$. Early stopping ensures Assumption~\ref{ass:uniform} while $G_n(\lambda)$ accumulates undetected. We normalize to remove the $(1-\lambda)^2$ factor:
\begin{equation}
    \Delta_{\mathrm{norm}}(\lambda) = \frac{\mathrm{MSE}^{\mathrm{test}}(\lambda) - \mathrm{MSE}^{\mathrm{train}}(\lambda)}{\mathrm{MSE}^{\mathrm{test}}(\lambda) + \mathrm{MSE}^{\mathrm{train}}(\lambda)}
\end{equation}

This quantity is proportional to $G_n(\lambda)$ and peaks at $\lambda_F^*$, being positive when the model reconstructs training samples better than held-out ones.

\paragraph{Ablations.}
To validate the robustness of our findings, we vary the configuration along several axes: (1) the data distribution $\Sigma_1$ (datasets of different diversity: FMA Large \citep{defferrard2016fma}, MTG Jamendo \citep{bogdanov2019jamendo}), (2) the noise distribution $\Sigma_0$ (scaling the variance by $\times 4$ and $\times 1/4$), (3) the latent space (Music2Latent vs Stable Audio VAE \citep{saopenVAE2025Evans}), (4) the modality (audio vs images using CelebA datasets~\citep{liu2015faceattributes}), (5) the architecture (Transformer vs UNet), (6) the model capacity (from 410M to 140M and 880M parameters), and (7) the sampling scheduler (uniform vs log-normal). Results are in Section~\ref{sec:ablations}. Full details on datasets, architectures, and configurations are provided in Appendix~\ref{app:ablations}.

\section{Results}
\label{sec:results}

\subsection{Validating Theoretical Predictions}

Before presenting our main results, we verify that both our latent spaces and the model architecture satisfy the assumptions underlying Theorem~\ref{thm:peak}.

\paragraph{Gaussianity of latent representations.}
Theorem~\ref{thm:peak} proves that the membership signal peaks at $\lambda_F^*$ under Gaussian isotropic assumptions. Table~\ref{tab:gaussianity} reports skewness, excess kurtosis, and covariance isotropy for each configuration. As we see, the latent spaces of all our audio configurations exhibit low skewness, kurtosis, and weak inter-dimension correlations, satisfying our Gaussian isotropic assumptions. In contrast, the latent space of our image configuration (CelebA with Stable Diffusion VAE) does not satisfy the required assumptions. While its skewness remains acceptable ($\overline{|\gamma|} = 0.12$), its kurtosis ($\overline{|\kappa|} = 0.71$) indicates heavy-tailed marginals, and the correlations are strong ($\overline{|\rho|} = 0.61$).

\begin{table}[t]
\caption{Gaussianity and isotropy of latent representations. $\overline{|\gamma|}$: mean absolute skewness (0 for symmetric); $\overline{|\kappa|}$: mean excess kurtosis (0 for Gaussian); $\overline{|\rho|}$: mean absolute inter-dimension correlation (0 for independent); $\|\Sigma - I\|_F/d$: normalized deviation from isotropic unit variance.}
\label{tab:gaussianity}
\begin{center}
\begin{small}
\begin{sc}
\resizebox{\columnwidth}{!}{%
\begin{tabular}{llcccc}
\toprule
Dataset & Latent space & $\overline{|\gamma|}$ & $\overline{|\kappa|}$ & $\overline{|\rho|}$ & $\|\Sigma - I\|_F/d$ \\
\midrule
MAESTRO v3 & Music2Latent & 0.18 & 0.22 & 0.23 & 0.14 \\
MTG-Jamendo & Music2Latent & 0.07 & 0.16 & 0.17 & 0.13 \\
FMA Large & Music2Latent & 0.08 & 0.23 & 0.16 & 0.12 \\
\midrule
MAESTRO v3 & Stable Audio VAE & 0.08 & 0.10 & 0.16 & 0.08 \\
\midrule
CelebA & Stable Diffusion VAE & 0.12 & \textbf{0.71} & \textbf{0.61} & \textbf{0.40} \\
\bottomrule
\end{tabular}
}
\end{sc}
\end{small}
\end{center}
\end{table}

\paragraph{Mechanistic assumptions: competition between linear and nonlinear features.}
The closed-form prediction $\lambda_F^*$ and the heuristic argument of Section~\ref{sec:heuristic} rely on the model leveraging nonlinear features near $\lambda_F^*$, where linear prediction becomes impossible. 
We test this directly by comparing the trained Transformer to a linear OLS predictor fitted on the same task. Figure~\ref{fig:ols_validation} reports the ratio of their test losses as a function of $\lambda$. 
As predicted, the ratio is close to 1 at the boundaries $\lambda \in \{0, 1\}$, where linear prediction suffices, and peaks near where the membership signal is maximum, i.e., where the Transformer's nonlinear capacity provides the largest gain. 
The pattern holds across multiple configurations.

\begin{figure}[t]
    \centering
    \includegraphics[width=0.85\linewidth]{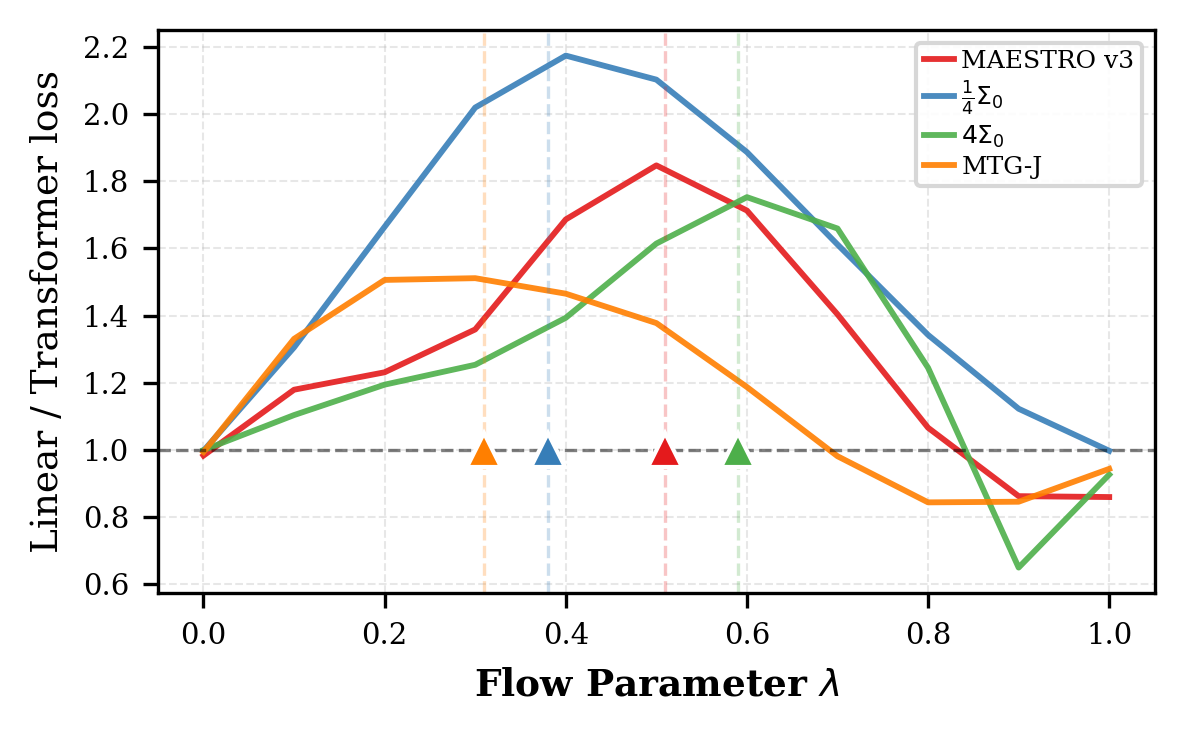}
    \caption{Ratio of Transformer to OLS test loss as a function of $\lambda$, across configurations. The ratio is consistently maximal where the membership signal peaks.}
    \label{fig:ols_validation}
\end{figure}

\paragraph{Bell-shaped gap curve.}
Figure \ref{fig:bell_curve_main} displays the normalized train-test gap $\Delta_{\mathrm{norm}}(\lambda)$ on MAESTRO~v3. As predicted, the gap exhibits a bell-shaped pattern: minimal at boundaries ($\lambda \in \{0, 1\}$) and maximal at intermediate values, confirming Corollary \ref{cor:boundary}. This bell shape is universal; it appears in all configurations we tested, regardless of dataset, architecture, latent space, or modality (Section \ref{sec:ablations}). Extended analysis, including additional statistics, is provided in Appendix~\ref{app:metrics_analysis}.

\begin{figure}[t]
    \centering
    \includegraphics[width=0.8\linewidth]{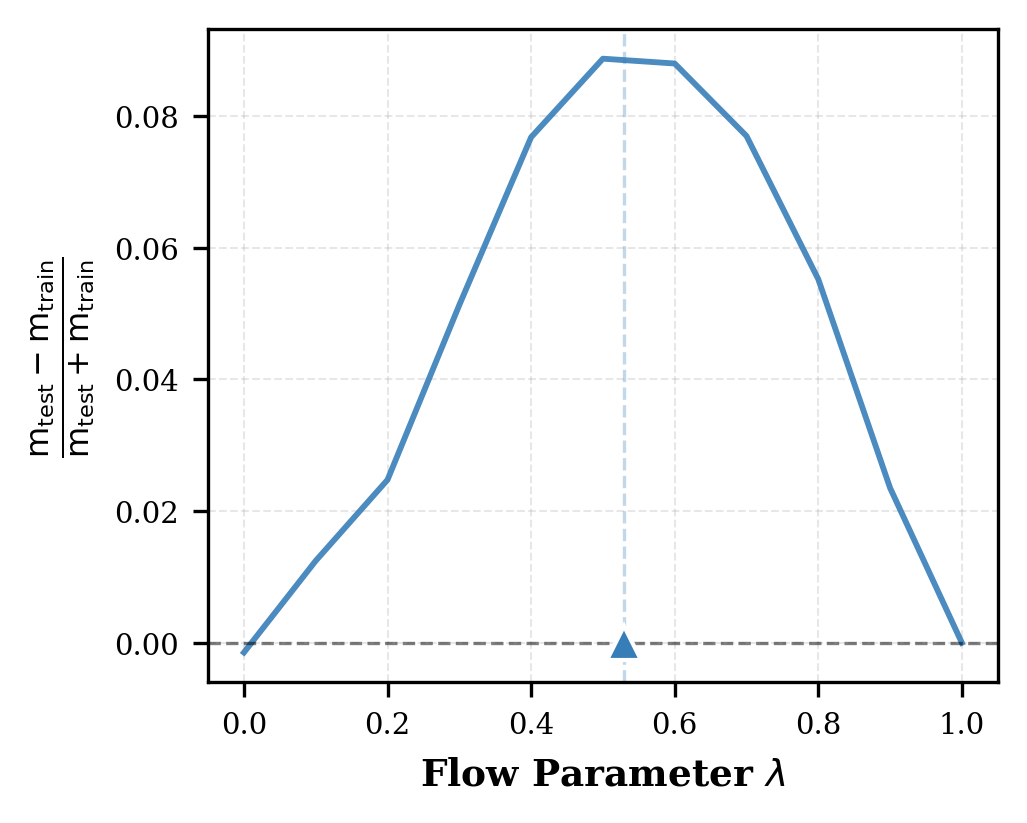}
    \caption{normalized train-test gap $\Delta_{\mathrm{norm}}(\lambda)$ on MAESTRO. The curve exhibits the predicted bell shape with peak near $\lambda_F^*$ (dashed line).}
    \label{fig:bell_curve_main}
\end{figure}

\paragraph{Peak location.}
For configurations satisfying the Gaussian isotropic assumptions, the observed peak $\lambda_{\mathrm{obs}}$ matches the theoretical prediction $\lambda_F^*$ from Proposition \ref{prop:C_min} within grid resolution. On MAESTRO~v3 with Music2Latent, $\lambda_{\mathrm{obs}} \in [0.5, 0.6]$ versus $\lambda_F^* = 0.52$. This agreement holds across all audio configurations (Table \ref{tab:ablations_summary}).

\paragraph{Temporal evolution.}
A central claim is that the membership signal differs from classical overfitting. Figure \ref{fig:temporal} provides direct evidence. Validation loss decreases steadily until early stopping (Figure \ref{fig:temporal_val}), suggesting healthy learning according to standard diagnostics. Yet the gap $\Delta_{\mathrm{norm}}(\lambda_F^*)$ grows from the first epochs (Figure \ref{fig:temporal_gap}), long before validation plateaus. By early stopping, a significant gap has accumulated, which is invisible to standard metrics but exploitable for membership inference.

\begin{figure}[t]
    \centering
    \begin{subfigure}[b]{0.49\columnwidth}
        \centering
        \includegraphics[width=1\linewidth]{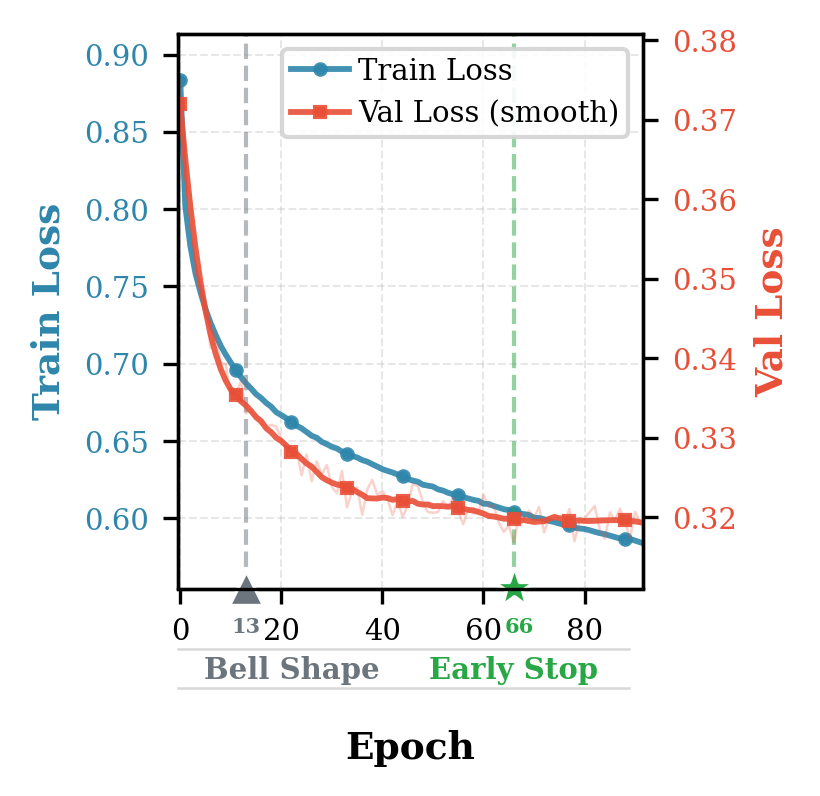}  
        \caption{Train and validation loss}
        \label{fig:temporal_val}
    \end{subfigure}
    \hfill
    \begin{subfigure}[b]{0.5\columnwidth}
        \centering
        \includegraphics[width=1\linewidth]{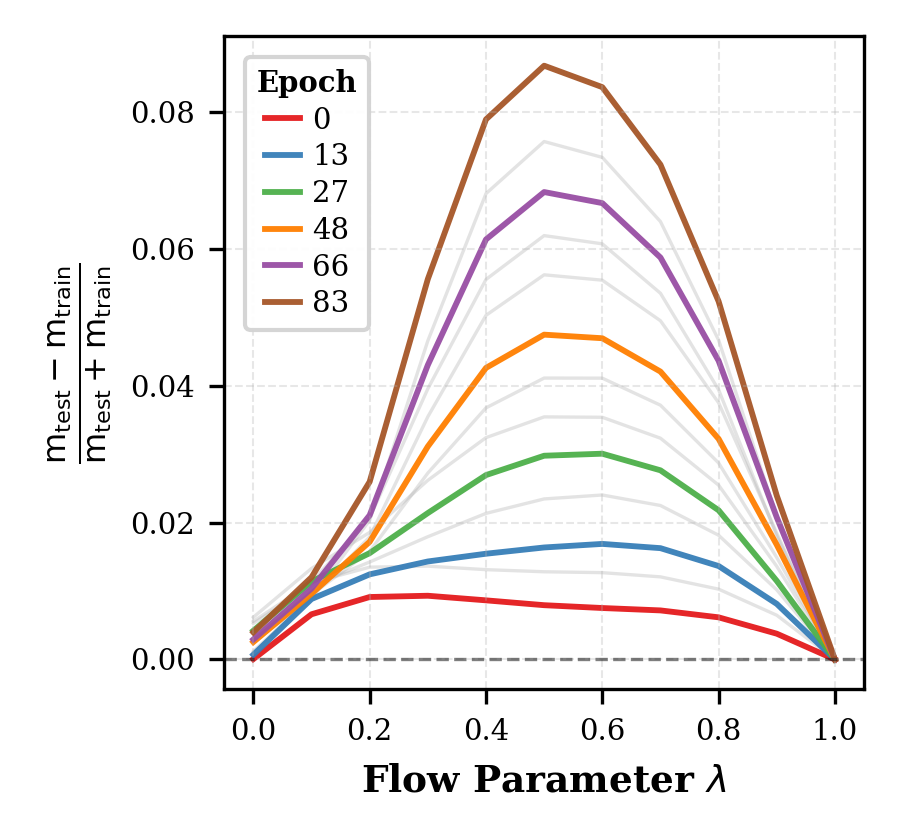}  
        \caption{Gap $\Delta_{\mathrm{norm}}(\lambda_F^*)$}
        \label{fig:temporal_gap}
    \end{subfigure}
    \caption{Temporal evolution on MAESTRO. (a) Validation loss decreases until early stopping (dashed). (b) Train-test gap grows throughout training.}
    \label{fig:temporal}
\end{figure}

\subsection{Ablation Study}
\label{sec:ablations}

Table~\ref{tab:ablations_summary} summarizes all configurations tested. The bell-shaped curve appears in every case; the peak prediction $\lambda_F^*$ matches when the Gaussian isotropic assumptions hold.

\paragraph{(1) Data distribution ($\Sigma_1$).}
Figure~\ref{fig:ablation_sigma} shows bell curves for three audio datasets with varying diversity and covariance $\Sigma_1$, testing Proposition~\ref{prop:C_min}: each yields a different predicted $\lambda_F^*$, and the observed peaks match in all cases (Table~\ref{tab:ablations_summary}, row 1). Peak magnitude varies with dataset size; MAESTRO v3 (smallest) shows the strongest signal, while FMA Large (largest) shows the weakest, consistent with the $\sim 1/n$ scaling of the membership signal predicted by Corollary \ref{cor:asymptotics}.

\paragraph{(2) Noise distribution ($\Sigma_0$).}
Figure~\ref{fig:ablation_sigma} also shows the effect of scaling the noise variance while fixing $\Sigma_1$ using the Maestrov3 dataset, directly testing Proposition~\ref{prop:C_min}: increasing $\sigma_0^2$ shifts $\lambda_F^*$ rightward as predicted (Table~\ref{tab:ablations_summary}, row 2). For $\Sigma_0 \times 4$, the predicted $\lambda_F^* = 0.59$ falls just below the observed interval $[0.6, 0.7]$, which we consider a match within grid resolution.

\begin{figure}[t]
    \centering
    \includegraphics[width=0.7\linewidth]{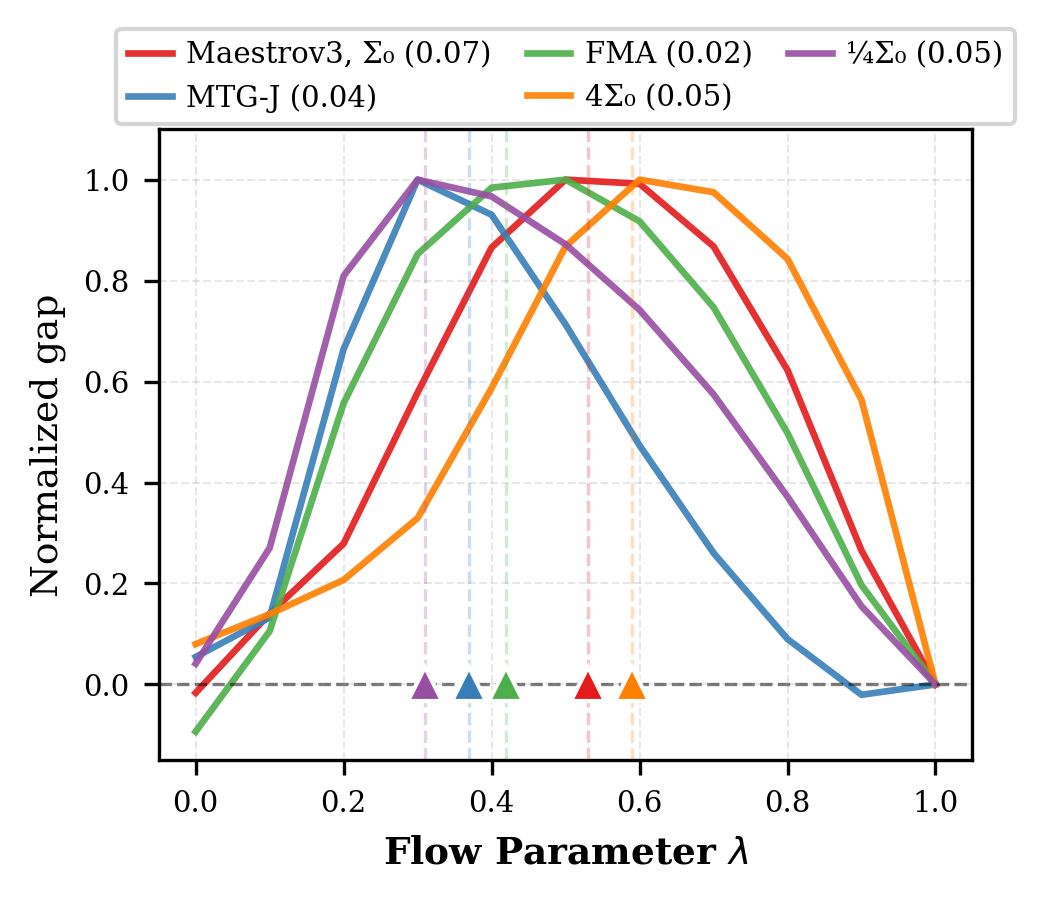}
    \caption{Ablations (1)--(2): Effect of data distribution $\Sigma_1$ and noise distribution $\Sigma_0$. Values are normalize for better visualization, Value between parentheses are raw values. Dashed lines indicate predicted $\lambda_F^*$ values. Trained with Maestrov3 dataset with Music2Latent latent space and Transformer architecture }
    \label{fig:ablation_sigma}
\end{figure}

\paragraph{(3) Latent space.}
Replacing Music2Latent with Stable Audio VAE yields a different predicted $\lambda_F^*$ (0.50 vs. 0.52), as expected since the two encoders induce different covariances $\Sigma_1$. The observed peak matches the prediction in both cases (Table~\ref{tab:ablations_summary}, rows 3; Figure~\ref{fig:ablation_arch_modality}).

\paragraph{(4) Modality: limits of $\lambda_F$ prediction.}
On CelebA with SD VAE, the bell-shaped curve persists (Figure~\ref{fig:ablation_arch_modality}), confirming that the phenomenon extends beyond audio. However, the observed peak ($\lambda_{\mathrm{obs}} \in [0.6, 0.7]$) deviates from the prediction ($\lambda_F^* = 0.45$; Table~\ref{tab:ablations_summary}).The high kurtosis and correlation values (Table~\ref{tab:gaussianity}) violate Theorem~\ref{thm:peak}’s requirement, suggesting why peak prediction fails for this configuration. 
A discussion about the analysis of these failure modes, along with an exploration of the possibility of relaxation, is provided in Appendix \ref{app:assumptions_study}. 

\begin{figure}[t]
    \centering
    \includegraphics[width=0.7\linewidth]{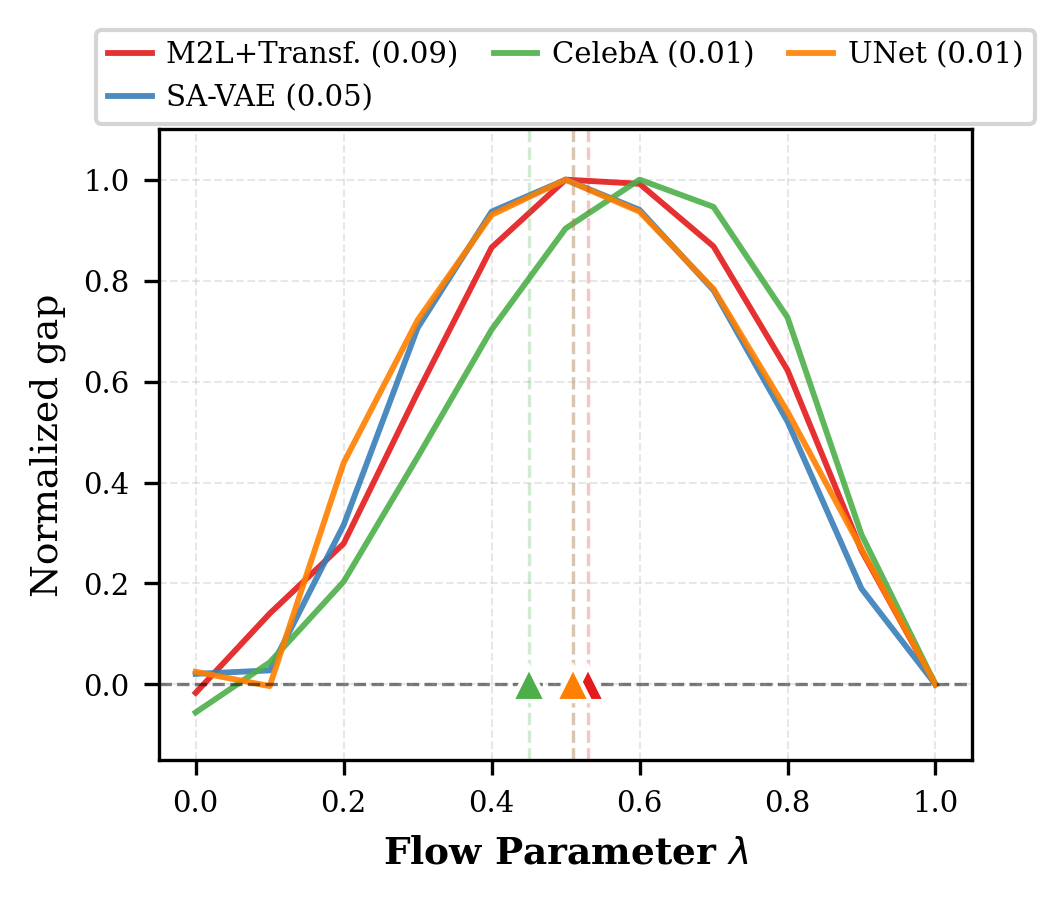}
    \caption{Ablations (3)--(5): Effect of latent space encoder, model architecture, and modality. Values are normalize for better visualization, Value between parentheses are raw values. The bell shape persists across all configurations; peak prediction fails only when Gaussian isotropic assumptions are violated (CelebA).}
    \label{fig:ablation_arch_modality}
\end{figure}

\paragraph{(5) Architecture.}
Replacing the Transformer with a UNet preserves both the bell-shaped structure and the peak location $\lambda_F^*$ (Table~\ref{tab:ablations_summary}, ablation (4); Figure~\ref{fig:ablation_arch_modality}). However, the peak magnitude drops substantially (from $0.09$ to $0.01$), consistent with the UNet producing notably lower-quality generations than the Transformer. 

\paragraph{(6) Model capacity.}
Varying the Transformer size from 140M to 880M parameters leaves the peak location unchanged across all configurations (Table~\ref{tab:ablations_summary}, row 6; Figure~\ref{fig:ablation_capacity_scheduler}), while the peak magnitude increases consistently with model size. Larger models fit training-specific residuals more accurately, amplifying the membership signal without shifting its location. 

\paragraph{(7) $\lambda$-sampling scheduler.}
Replacing the log-normal scheduler with a uniform one preserves both the bell-shaped curve and the peak location while reducing the peak magnitude (Table~\ref{tab:ablations_summary}, row 7; Figure~\ref{fig:ablation_capacity_scheduler}). 
This attenuation is consistent with the log-normal scheduler concentrating training near $\lambda \approx 0.5$, which coincides with $\lambda_F^*$, thereby amplifying the membership signal.

\begin{figure}[t]
    \centering
    \includegraphics[width=0.\linewidth]{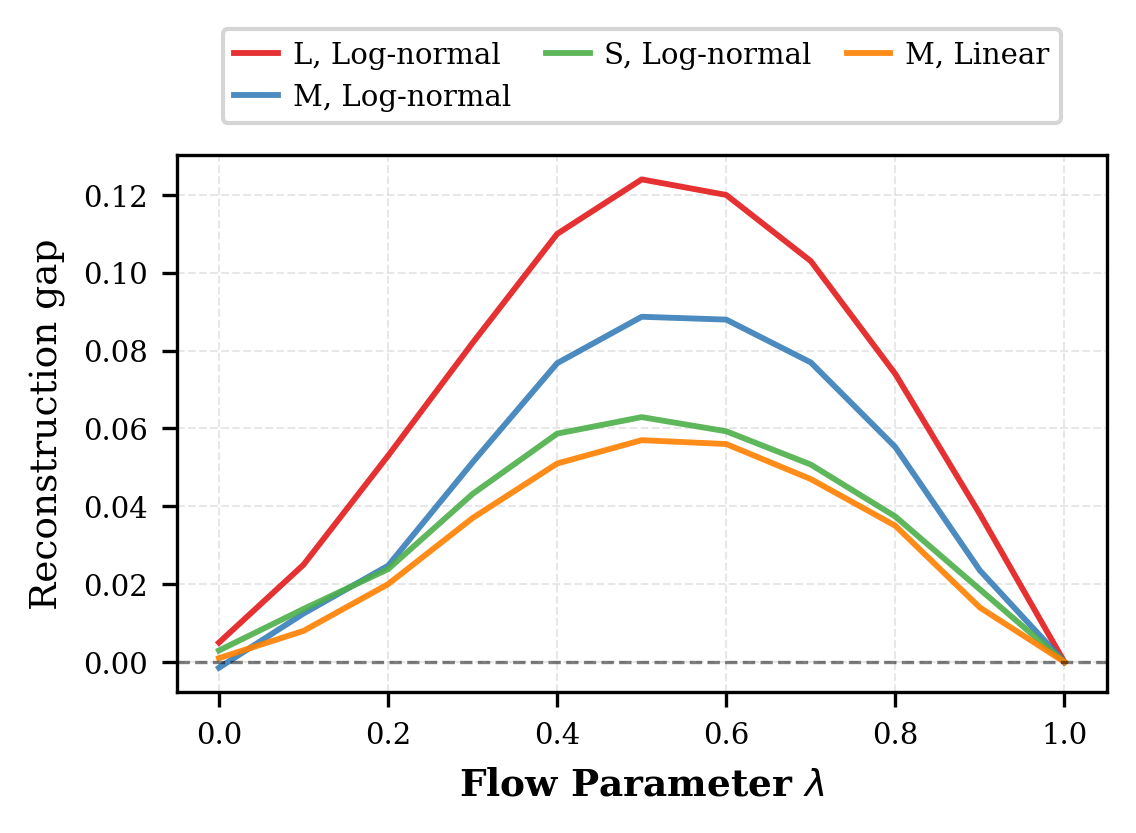}
    \caption{Ablations (6)--(7): Effect of model capacity and $\lambda$-sampling scheduler. The peak location remains unchanged across all configurations; only the magnitude varies. Sizes: S is 140M parameters, M is 410M parameters and L is 880M parameters}
    \label{fig:ablation_capacity_scheduler}
\end{figure}

\begin{table}[t]
  \caption{Ablation study summary. All configurations exhibit the bell-shaped curve. 
  For ablations (1)--(4), the predicted peak $\lambda_F^*$ matches $\lambda_{\mathrm{obs}}$ 
  when Gaussian isotropic assumptions hold. For ablations (5)--(7), the peak location is 
  unchanged while magnitude varies. $\dagger$: assumptions violated 
  (see Table~\ref{tab:gaussianity}).}
  \label{tab:ablations_summary}
  \begin{center}
    \begin{small}
      \begin{sc}
        \resizebox{\columnwidth}{!}{%
        \begin{tabular}{cllccc}
          \toprule
          & Ablation & Configuration & $\lambda_F^*$ & $\lambda_{\mathrm{obs}}$ & Match \\
          \midrule
          \multirow{3}{*}{(1)} & \multirow{3}{*}{Data ($\Sigma_1$)} 
          & MAESTRO v3  & 0.52 & 0.5--0.6 & \checkmark \\
          & & MTG-Jamendo & 0.37 & 0.3--0.4 & \checkmark \\
          & & FMA Large   & 0.42 & 0.4--0.5 & \checkmark \\
          \midrule
          \multirow{3}{*}{(2)} & \multirow{3}{*}{Noise ($\Sigma_0$)} 
          & $\Sigma_0 \times 0.25$ & 0.31 & 0.3--0.4 & \checkmark \\
          & & $\Sigma_0 \times 1$   & 0.52 & 0.5--0.6 & \checkmark \\
          & & $\Sigma_0 \times 4$   & 0.59 & 0.6--0.7 & \checkmark \\
          \midrule
          \multirow{2}{*}{(3)} & \multirow{2}{*}{Latent space} 
          & Music2Latent     & 0.52 & 0.5--0.6 & \checkmark \\
          & & Stable Audio VAE & 0.50 & 0.5--0.6 & \checkmark \\
          \midrule
          (4) & Modality$^\dagger$ & CelebA (SD VAE) & 0.45 & 0.6--0.7 & $\times$ \\
          \midrule\midrule
          & Ablation & Configuration & \multicolumn{3}{c}{Peak magnitude} \\
          \midrule
          \multirow{2}{*}{(5)} & \multirow{2}{*}{Architecture} 
          & Transformer & \multicolumn{3}{c}{0.09} \\
          & & UNet        & \multicolumn{3}{c}{0.01} \\
          \midrule
          \multirow{3}{*}{(6)} & \multirow{3}{*}{Model capacity} 
          & 140M & \multicolumn{3}{c}{0.06} \\
          & & 410M & \multicolumn{3}{c}{0.09} \\
          & & 880M & \multicolumn{3}{c}{0.12} \\
          \midrule
          \multirow{2}{*}{(7)} & \multirow{2}{*}{Scheduler} 
          & Log-normal & \multicolumn{3}{c}{0.09} \\
          & & Uniform    & \multicolumn{3}{c}{0.06} \\
          \bottomrule
        \end{tabular}
        }
      \end{sc}
    \end{small}
  \end{center}
\end{table}

\subsection{What holds universally vs.\ what requires our assumptions}
\label{sec:synthesis}
Across all configurations, the bell-shaped structure, its boundary behavior, its temporal accumulation, and the linear/nonlinear competition mechanism hold universally, including for CelebA, where our Gaussian isotropic assumptions are violated (ablations 1--7). Within this universal structure, peak \emph{location} is governed solely by data geometry $(\Sigma_0, \Sigma_1)$: dataset, noise scale, and encoder shift predictably per Proposition~\ref{prop:C_min} (ablations 1--3), while architecture, capacity, and scheduler do not (ablations 4, 6, 7). Peak \emph{magnitude}, by contrast, reflects model and training choices: larger models and log-normal scheduling amplify the signal without moving its location (ablations 6--7).

\section{Implications for Membership Inference}
\label{sec:mia}
Our analysis reveals that the reconstruction error follows a predictable bell-shaped profile across $\lambda$, with a computable peak location and a vanishing signal at the boundaries. As a proof of concept, we demonstrate that this structured gap is exploitable for MIA. 

\paragraph{Membership Inference Attack.}
Given a query sample $x_1$, we compute the reconstruction MSE at each $\lambda \in \{0, 0.1, \ldots, 1.0\}$ using $K=100$ noise samples, yielding an 11-dimensional feature vector that captures the full $\lambda$-resolved profile. We then train a simple MLP classifier on these features to predict member/non-member, requiring only forward passes through the trained model (no gradient computation or weight access), making the attack lightweight and practical. The $\lambda$-resolved profile provides a richer signal than any single evaluation point: it encodes the full shape of the bell curve, whose amplitude and location are characteristic of training membership. Using a single reconstruction error at $\lambda=\lambda^*$, equivalent to ignoring the $\lambda$-resolved structure (i.e. Naive Attack), achieves only a 0.67 AUC score. Consistent with our theory, the naive baseline (i.e., using a single reconstruction error) peaks at $\lambda^*$, confirming that the membership signal concentrates there as predicted. Adapting SecMI~\citep{duan2023diffusion} and PIA~\citep{kong2023pia} to Rectified Flows yields AUC scores of 0.72 and 0.83, respectively. Our method achieves 0.91 AUC on MAESTRO v3, demonstrating that the theoretical signal translates to practical risks. Results on additional datasets, in Appendix~\ref{app:mia_details}, remain positive across all configurations, with AUC scores decreasing consistently with the amplitude of the bell-shaped gap observed for each dataset.

\section{Discussion}
\label{sec:discussion}

\paragraph{Limitations.}
The closed-form peak prediction $\lambda_F^*$ requires near-Gaussian isotropic latents; on CelebA with SD VAE, the peak location deviates, though the bell shape persists, confirming it is a universal property of Rectified Flow training independent of our distributional assumptions.
Our theory also assumes independent coupling ($X_0 \perp\!\!\!\perp X_1$), excluding the reflow procedure; preliminary experiments (Appendix~\ref{app:reflow_prelim}) suggest the bell shape persists under one reflow step, but with substantially attenuated magnitude, indicating reflow may offer a natural mitigation as a byproduct of its trajectory-straightening objective.
The MIA we developed is a proof of concept under a white-box setting; stronger threat models, such as black-box or label-only access, remain to be explored. 
We also study unconditional generation exclusively, while deployed systems condition on text prompts; conditioning modifies the effective distribution, altering $\Sigma_1$ and hence $\lambda_F^*$. Finally, our experiments scale up to 880M parameters; model capacity amplifies the signal (ablation 6), while dataset size attenuates it (ablation 1), and their interaction at the scale of deployed systems such as FLUX or SD3 remains an open empirical question.

\paragraph{Implications.}
Since $\lambda_F^*$ is architecture-independent (ablations 4--7), the peak can be located empirically on a small proxy model and transferred to larger target models without retraining. This structural knowledge also opens the door to targeted defenses: rather than regularizing uniformly across the interpolation path, one could concentrate privacy-preserving mechanisms near $\lambda_F^*$, where the membership signal is maximal. Beyond security, our analysis connects to training efficiency: the peak $\lambda_F^*$ corresponds to where prediction is hardest, since the linear information carried by $x_\lambda$ about the velocity vanishes and the model must rely on nonlinear structure. \citet{esser2024scaling} found empirically that concentrating $p(\lambda)$ near $0.5$ improves SD3; our theory provides a principled explanation and suggests that adapting $p(\lambda)$ to dataset-specific $\lambda^*$ could further accelerate convergence. Conversely, schedulers concentrated near $\lambda_F^*$ also amplify membership leakage, revealing a fundamental trade-off between training efficiency and privacy.

\section{Conclusion}
\label{sec:conclusion}
We showed that Rectified Flows encode a membership signal in a structured, predictable way. It follows a universal bell-shaped curve over $\lambda$ and peaks where the linear information carried by the interpolant vanishes: there, the model can no longer predict the velocity linearly and must fit a nonlinear structure that is statistically indistinguishable from sample-specific residuals. This location is governed by data geometry alone, while model choices only set the magnitude, and the signal accumulates silently while standard diagnostics see nothing. This structure translates into practical risk, as a simple MIA exploiting it consistently outperforms baselines adapted from the diffusion literature.

\section*{Impact Statement}
This work aims to improve the theoretical understanding of Rectified Flows and the information they retain about their training data. We hope it provides useful tools for practitioners and researchers working on generative models.

\section*{Acknowledgements}
We thank the anonymous reviewers for their thorough and insightful reviews. We are also grateful to Manuel Moussalam and Romain Hennequin from Deezer for their careful reading of the mathematical derivations and valuable feedback during the preparation of this manuscript. This work was supported by the computational resources provided by LTCI, Télécom Paris, Institut Polytechnique de Paris, Palaiseau, France.

\bibliography{ref}
\bibliographystyle{icml2026}

\newpage
\appendix
\onecolumn

\section{Proofs}
\label{app:proofs}

\subsection{Proof of Proposition \ref{prop:asymmetry}}
\label{app:proof_asymmetry}

\begin{proof}
Conditioning on $\mathcal{D}^{\mathrm{train}}$ fixes the trained model $v_\theta$ as a deterministic function. Define $g: \mathbb{R}^d \to \mathbb{R}^d$ by $g(x) = v_\theta(x, \lambda) - v^*(x, \lambda)$.

Each test sample $(\tilde{x}_0^{(j)}, \tilde{x}_1^{(j)})$ is drawn i.i.d. from $p_0 \times p_1$, independently of $\mathcal{D}^{\mathrm{train}}$. By the orthogonality property \eqref{eq:orthogonality}:
\begin{equation}
    \mathbb{E}_{p_0 \times p_1}[\langle g(X_\lambda), V - v^*(X_\lambda, \lambda) \rangle] = 0
\end{equation}

Therefore, for each test sample $j$:
\begin{equation}
    \mathbb{E}\bigl[\langle v_\theta(\tilde{x}_\lambda^{(j)}, \lambda) - v^*(\tilde{x}_\lambda^{(j)}, \lambda), \tilde{\epsilon}_j(\lambda) \rangle \mid \mathcal{D}^{\mathrm{train}}\bigr] = 0
\end{equation}

By linearity: $\mathbb{E}_{\mathcal{D}^{\mathrm{test}}}[G_m^{\mathrm{test}}(\lambda) \mid \mathcal{D}^{\mathrm{train}}] = 0$.

On training data, this argument does not apply: both $v_\theta$ and $\{\epsilon_i(\lambda)\}_{i=1}^n$ depend on $\mathcal{D}^{\mathrm{train}}$, so $g$ is not independent of the residuals.
\end{proof}

\subsection{Proof of Proposition \ref{prop:C_min}: Critical point of cross-covariance}

\begin{proof}
Expand $\|C(\lambda)\|_F^2 = \|\lambda\Sigma_1 - (1-\lambda)\Sigma_0\|_F^2$:
\begin{align}
    \|C(\lambda)\|_F^2 &= \lambda^2 \mathrm{tr}(\Sigma_1^2) + (1-\lambda)^2 \mathrm{tr}(\Sigma_0^2) \notag \\
    &\quad - 2\lambda(1-\lambda)\mathrm{tr}(\Sigma_0\Sigma_1)
\end{align}
Rearranging:
\begin{align}
    \|C(\lambda)\|_F^2 &= \lambda^2 \mathrm{tr}((\Sigma_0 + \Sigma_1)^2) \notag \\
    &\quad - 2\lambda\bigl(\mathrm{tr}(\Sigma_0^2) + \mathrm{tr}(\Sigma_0\Sigma_1)\bigr) + \mathrm{tr}(\Sigma_0^2)
\end{align}
This is a convex parabola with a positive leading coefficient. The minimum is at $\lambda_F^*$.
\end{proof}

\subsection{Proof of Theorem \ref{thm:peak}: Isotropic Gaussian Case}
\label{app:gaussian_proof}

We provide a complete analysis of the expected train-test gap in the isotropic Gaussian setting.

\subsubsection{Setup}

\begin{assumption}[Isotropic Gaussian]
\label{ass:isotropic}
Let $X_0 \sim \mathcal{N}(0, \sigma_0^2 I_d)$ and $X_1 \sim \mathcal{N}(0, \sigma_1^2 I_d)$ be independent, with $\sigma_0, \sigma_1 > 0$.
\end{assumption}

Define $X_\lambda = (1-\lambda)X_0 + \lambda X_1$ and $V = X_1 - X_0$. In the isotropic case, the covariance matrices from Section~\ref{sec:setup} reduce to scalars times the identity:
\begin{align}
    \Phi(\lambda) &= \phi(\lambda) I_d & \text{where} \quad \phi(\lambda) &= (1-\lambda)^2\sigma_0^2 + \lambda^2\sigma_1^2 \\
    C(\lambda) &= c(\lambda) I_d & \text{where} \quad c(\lambda) &= \lambda\sigma_1^2 - (1-\lambda)\sigma_0^2
\end{align}

Since the covariances are isotropic, all coordinates are independent and identically distributed. We analyze a single coordinate $j$, then sum over $d$ coordinates.

For coordinate $j$:
\begin{align}
    X_{\lambda,j} &\sim \mathcal{N}(0, \phi(\lambda)) \\
    V_j &\sim \mathcal{N}(0, \sigma_0^2 + \sigma_1^2) \\
    \mathrm{Cov}(V_j, X_{\lambda,j}) &= c(\lambda)
\end{align}

\subsubsection{Optimal Predictor}

Since $(X_{\lambda,j}, V_j)$ is jointly Gaussian with zero means, the conditional expectation is linear:
\begin{equation}
    v^*_j(x, \lambda) = \mathbb{E}[V_j \mid X_{\lambda,j} = x] = a(\lambda) \cdot x
\end{equation}
where:
\begin{equation}
    a(\lambda) = \frac{\mathrm{Cov}(V_j, X_{\lambda,j})}{\mathrm{Var}(X_{\lambda,j})} = \frac{c(\lambda)}{\phi(\lambda)}
\end{equation}

\subsubsection{Irreducible Variance}

\begin{lemma}[Irreducible variance]
\label{lem:irr_var}
Under Assumption~\ref{ass:isotropic}:
\begin{equation}
    \sigma^2_{\mathrm{irr}}(\lambda) = d \left( \sigma_0^2 + \sigma_1^2 - \frac{c(\lambda)^2}{\phi(\lambda)} \right)
\end{equation}
\end{lemma}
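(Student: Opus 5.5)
The plan is to reduce to a single coordinate and then apply the standard Gaussian formula for the residual variance of a linear regression.

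By Assumption~\ref{ass:isotropic}, the coordinates of $X_0$ and $X_1$ are mutually independent. Hence the pairs $(X_{\lambda,j}, V_j)$ for $j=1,\dots,d$ are independent and identically distributed. The $j$-th component of $v^*$ depends only on $X_{\lambda,j}$: conditioning on the other coordinates adds no information about $V_j$. So
\begin{equation}
\sigma^2_{\mathrm{irr}}(\lambda) = \sum_{j=1}^d \mathbb{E}\bigl[(V_j - a(\lambda)X_{\lambda,j})^2\bigr] = d\,\mathbb{E}\bigl[(V_1 - a(\lambda)X_{\lambda,1})^2\bigr].
\end{equation}
This uses the linear form $v^*_j(x,\lambda) = a(\lambda)x$ with $a(\lambda) = c(\lambda)/\phi(\lambda)$ derived just above.

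For the single-coordinate term, I will expand the square:
\begin{equation}
\mathbb{E}[(V_1 - aX_{\lambda,1})^2] = \mathrm{Var}(V_1) - 2a\,\mathrm{Cov}(V_1,X_{\lambda,1}) + a^2\,\mathrm{Var}(X_{\lambda,1}).
\end{equation}
All means are zero, so second moments equal (co)variances. Substituting the previously computed quantities, $\mathrm{Var}(V_1) = \sigma_0^2+\sigma_1^2$ (by independence of $X_0$ and $X_1$), $\mathrm{Cov}(V_1,X_{\lambda,1}) = c(\lambda)$ and $\mathrm{Var}(X_{\lambda,1}) = \phi(\lambda)$, gives
\begin{equation}
\sigma_0^2+\sigma_1^2 - \frac{2c^2}{\phi} + \frac{c^2}{\phi} = \sigma_0^2+\sigma_1^2 - \frac{c(\lambda)^2}{\phi(\lambda)}.
\end{equation}
Multiplying by $d$ yields the claim.

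The only step needing care is the reduction from the $d$-dimensional conditional expectation to coordinatewise conditioning. This needs $\phi(\lambda)>0$, so that $a(\lambda)$ is well defined; that holds for all $\lambda\in[0,1]$ because $\sigma_0,\sigma_1>0$. It also needs $V_j$ to be independent of $(X_{\lambda,k})_{k\neq j}$, which follows from the product structure of the joint Gaussian law. Everything else is routine algebra.
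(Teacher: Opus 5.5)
Your proof is correct and follows the paper's argument: reduce to a single coordinate with $v^*_j(x,\lambda)=a(\lambda)x$, expand $\mathbb{E}[(V_j-aX_{\lambda,j})^2]$ as $\mathrm{Var}(V_j)-2a\,\mathrm{Cov}(V_j,X_{\lambda,j})+a^2\mathrm{Var}(X_{\lambda,j})$, substitute, and multiply by $d$. Your explicit justification of the coordinatewise conditioning is a small addition the paper leaves implicit. The condition that does the work is that each pair $(X_{\lambda,j},V_j)$ is independent of the remaining coordinates, which you stated at the outset; equivalently, $A(\lambda)=C(\lambda)\Phi(\lambda)^{-1}=(c/\phi)I_d$.
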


\begin{proof}
The residual for coordinate $j$ is $\epsilon_j(\lambda) = V_j - a(\lambda) X_{\lambda,j}$. Its variance is:
\begin{align}
    \mathrm{Var}(\epsilon_j(\lambda)) &= \mathrm{Var}(V_j) - 2a(\lambda)\mathrm{Cov}(V_j, X_{\lambda,j}) + a(\lambda)^2 \mathrm{Var}(X_{\lambda,j}) \\
    &= (\sigma_0^2 + \sigma_1^2) - 2\frac{c(\lambda)}{\phi(\lambda)} \cdot c(\lambda) + \frac{c(\lambda)^2}{\phi(\lambda)^2} \cdot \phi(\lambda) \\
    &= \sigma_0^2 + \sigma_1^2 - \frac{c(\lambda)^2}{\phi(\lambda)} \triangleq \sigma_\epsilon^2(\lambda)
\end{align}
Summing over $d$ independent coordinates gives $\sigma^2_{\mathrm{irr}}(\lambda) = d \cdot \sigma_\epsilon^2(\lambda)$.
\end{proof}

\subsubsection{OLS Estimator}

For each coordinate $j$, we have $n$ i.i.d.\ samples $(x_{\lambda,j}^{(i)}, v_j^{(i)})_{i=1}^n$ with the model:
\begin{equation}
    v_j^{(i)} = a(\lambda) x_{\lambda,j}^{(i)} + \epsilon_j^{(i)}(\lambda)
\end{equation}
where $\epsilon_j^{(i)}(\lambda) \sim \mathcal{N}(0, \sigma_\epsilon^2(\lambda))$ and $\epsilon_j^{(i)}(\lambda) \perp x_{\lambda,j}^{(i)}$ (by Gaussianity).

This is a univariate linear regression without an intercept. The OLS estimator is:
\begin{equation}
    \hat{a} = \frac{\sum_{i=1}^n v_j^{(i)} x_{\lambda,j}^{(i)}}{\sum_{i=1}^n (x_{\lambda,j}^{(i)})^2} = \frac{\sum_{i=1}^n v_j^{(i)} x_{\lambda,j}^{(i)}}{S}
\end{equation}
where $S \triangleq \sum_{i=1}^n (x_{\lambda,j}^{(i)})^2$.

Substituting $v_j^{(i)} = a(\lambda) x_{\lambda,j}^{(i)} + \epsilon_j^{(i)}(\lambda)$:
\begin{equation}
    \hat{a} = a(\lambda) + \frac{\sum_i \epsilon_j^{(i)}(\lambda) x_{\lambda,j}^{(i)}}{S}
\end{equation}

\subsubsection{Distribution of $S$}

Since $x_{\lambda,j}^{(i)} \sim \mathcal{N}(0, \phi(\lambda))$, we have $x_{\lambda,j}^{(i)} / \sqrt{\phi(\lambda)} \sim \mathcal{N}(0, 1)$. Therefore:
\begin{equation}
    \frac{S}{\phi(\lambda)} = \sum_{i=1}^n \left(\frac{x_{\lambda,j}^{(i)}}{\sqrt{\phi(\lambda)}}\right)^2 \sim \chi^2_n
\end{equation}

For $Y \sim \chi^2_n$ with $n > 2$, a standard result gives $\mathbb{E}[1/Y] = 1/(n-2)$. Hence:
\begin{equation}
    \mathbb{E}\left[\frac{1}{S}\right] = \frac{1}{\phi(\lambda)} \cdot \mathbb{E}\left[\frac{1}{\chi^2_n}\right] = \frac{1}{\phi(\lambda)(n-2)}
    \label{eq:expectation_inv_S}
\end{equation}

\subsubsection{Expected Training Loss}

\begin{lemma}[Expected training loss per coordinate]
\label{lem:train_loss}
For a single coordinate $j$:
\begin{equation}
    \mathbb{E}[L_{\mathrm{train},j}(\lambda)] = \sigma_\epsilon^2(\lambda) \cdot \frac{n-1}{n}
\end{equation}
\end{lemma}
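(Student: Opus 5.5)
The per-coordinate training loss is
\[
L_{\mathrm{train},j}(\lambda)=\frac1n\sum_{i=1}^n\bigl(v_j^{(i)}-\hat a\,x_{\lambda,j}^{(i)}\bigr)^2,
\]
and the plan is to compute its expectation with the OLS projection identity. Writing $x=(x_{\lambda,j}^{(i)})_i$ and $\epsilon=(\epsilon_j^{(i)}(\lambda))_i$, the fitted residual vector is $v-\hat a x=(I-P_x)\epsilon$, where $P_x=xx^\top/S$ is the rank-one projector onto $x$. This holds because $v=a(\lambda)x+\epsilon$ and $(I-P_x)x=0$. Hence
\[
nL_{\mathrm{train},j}=\|\epsilon\|^2-\frac{(\epsilon^\top x)^2}{S}.
\]

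First, take expectations of each term. We have $\mathbb{E}\|\epsilon\|^2=n\sigma_\epsilon^2(\lambda)$, since the $\epsilon^{(i)}_j$ are i.i.d.\ with variance $\sigma_\epsilon^2$.

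For the second term we condition on $x$. Under the isotropic Gaussian assumption, $\epsilon_j^{(i)}$ is independent of $x_{\lambda,j}^{(i)}$: they are jointly Gaussian and uncorrelated by construction of $a(\lambda)$. Across samples, everything is independent. So $\epsilon$ is independent of the whole vector $x$ and has distribution $\mathcal N(0,\sigma_\epsilon^2 I_n)$. Then
\[
\mathbb{E}\bigl[(\epsilon^\top x)^2\mid x\bigr]=\sigma_\epsilon^2\|x\|^2=\sigma_\epsilon^2 S,
\]
so the ratio has conditional expectation exactly $\sigma_\epsilon^2$. Equivalently, $\mathbb{E}[\epsilon^\top P_x\epsilon\mid x]=\sigma_\epsilon^2\,\mathrm{tr}(P_x)=\sigma_\epsilon^2$.

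Dividing by $n$ gives
\[
\mathbb{E}[L_{\mathrm{train},j}]=\frac{n-1}{n}\,\sigma_\epsilon^2(\lambda).
\]

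The only step needing care is the independence of $\epsilon$ from $x$, which is what makes the conditional expectation clean. Two points support it. The pair $(\epsilon_j,X_{\lambda,j})$ is a linear image of the Gaussian pair $(X_{0,j},X_{1,j})$, so it is jointly Gaussian. It is also uncorrelated, since $\mathrm{Cov}(V_j-aX_{\lambda,j},X_{\lambda,j})=c-a\phi=0$. Joint Gaussianity plus zero correlation gives independence.

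Note that $\mathbb{E}[1/S]$ from \eqref{eq:expectation_inv_S} is not needed here. The trace argument avoids it; that expectation will matter later, for $G_n$ or $E_n$ individually.
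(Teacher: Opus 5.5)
Your proof is correct and follows essentially the paper's route. The paper simply cites the standard OLS identity $\mathbb{E}[\mathrm{RSS}] = (n-p)\,\sigma_\epsilon^2(\lambda)$ with $p=1$. Your decomposition $nL_{\mathrm{train},j} = \|\epsilon\|^2 - \epsilon^\top P_x \epsilon$ together with $\mathbb{E}[\epsilon^\top P_x\epsilon \mid x] = \sigma_\epsilon^2\,\mathrm{tr}(P_x) = \sigma_\epsilon^2$ is exactly the derivation of that identity, and you also justify the independence of $\epsilon$ and $x$ that the paper only asserts ``by Gaussianity''.
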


\begin{proof}
This is a standard result for OLS regression. For a model with $p$ parameters, the expected residual sum of squares satisfies:
\begin{equation}
    \mathbb{E}\left[\sum_{i=1}^n (v_j^{(i)} - \hat{a} x_{\lambda,j}^{(i)})^2\right] = (n - p) \sigma_\epsilon^2(\lambda)
\end{equation}
Here $p = 1$ (single parameter, no intercept), so:
\begin{equation}
    \mathbb{E}[n \cdot L_{\mathrm{train},j}(\lambda)] = (n-1) \sigma_\epsilon^2(\lambda)
\end{equation}
which gives $\mathbb{E}[L_{\mathrm{train},j}(\lambda)] = \sigma_\epsilon^2(\lambda) \cdot \frac{n-1}{n}$.
\end{proof}

\subsubsection{Expected Test Loss}

\begin{lemma}[Expected test loss per coordinate]
\label{lem:test_loss}
For a single coordinate $j$:
\begin{equation}
    \mathbb{E}[L_{\mathrm{test},j}(\lambda)] = \sigma_\epsilon^2(\lambda) \cdot \frac{n-1}{n-2}
\end{equation}
\end{lemma}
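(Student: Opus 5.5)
The plan is to compute the expected test loss for one coordinate by conditioning on the training inputs, using the OLS representation $\hat a = a(\lambda) + \sum_i \epsilon_j^{(i)} x_{\lambda,j}^{(i)}/S$ derived above. Take a fresh test pair $(\tilde x_{\lambda,j}, \tilde v_j)$ with $\tilde v_j = a(\lambda)\tilde x_{\lambda,j} + \tilde\epsilon_j$, independent of the training set. The test residual is then $\tilde v_j - \hat a\,\tilde x_{\lambda,j} = \tilde\epsilon_j - (\hat a - a(\lambda))\tilde x_{\lambda,j}$. Squaring it gives three terms. The cross term $-2\tilde\epsilon_j(\hat a - a)\tilde x_{\lambda,j}$ has zero mean, because $\tilde\epsilon_j$ is independent of $\tilde x_{\lambda,j}$ (by Gaussianity) and of the training set, and has mean zero. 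What remains is
\begin{equation*}
\mathbb{E}[L_{\mathrm{test},j}] = \sigma_\epsilon^2(\lambda) + \mathbb{E}\bigl[(\hat a - a(\lambda))^2\bigr]\,\mathbb{E}\bigl[\tilde x_{\lambda,j}^2\bigr] = \sigma_\epsilon^2(\lambda) + \phi(\lambda)\,\mathbb{E}\bigl[(\hat a - a(\lambda))^2\bigr].
\end{equation*}
Since the test samples are i.i.d., averaging over $m$ of them does not change this.

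The key step is computing $\mathbb{E}[(\hat a - a)^2]$. First condition on the training inputs $\{x_{\lambda,j}^{(i)}\}$. The residuals $\epsilon_j^{(i)}$ are i.i.d. $\mathcal{N}(0,\sigma_\epsilon^2)$ and independent of the inputs, which holds because $(X_{\lambda,j}, \epsilon_j)$ are jointly Gaussian and uncorrelated. Therefore
\begin{equation*}
\mathbb{E}\bigl[(\hat a - a)^2 \mid x\bigr] = \frac{\sigma_\epsilon^2 \sum_i (x_{\lambda,j}^{(i)})^2}{S^2} = \frac{\sigma_\epsilon^2}{S}.
\end{equation*}
Taking the outer expectation and using \eqref{eq:expectation_inv_S}, i.e. $\mathbb{E}[1/S] = 1/(\phi(\lambda)(n-2))$ for $n>2$, gives $\mathbb{E}[(\hat a - a)^2] = \sigma_\epsilon^2/(\phi(\lambda)(n-2))$.

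Substituting back, $\mathbb{E}[L_{\mathrm{test},j}] = \sigma_\epsilon^2\bigl(1 + \tfrac{1}{n-2}\bigr) = \sigma_\epsilon^2\,\tfrac{n-1}{n-2}$, as claimed. The only delicate point is the independence of the residuals from the inputs, which is needed both for the conditional-variance computation and for killing the cross term. I would justify it explicitly from joint Gaussianity of $(X_{\lambda,j}, V_j)$ together with zero covariance $\mathrm{Cov}(\epsilon_j, X_{\lambda,j}) = c - a\phi = 0$. The moment $\mathbb{E}[1/\chi^2_n]$ is finite exactly when $n>2$, which matches the hypothesis of Theorem~\ref{thm:peak}.
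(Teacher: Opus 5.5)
Your proposal is correct and follows essentially the same route as the paper. Both arguments split the test residual as $\tilde\epsilon_j - (\hat a - a(\lambda))\tilde x_{\lambda,j}$ and drop the cross term by independence, then compute $\mathbb{E}[(\hat a - a)^2 \mid x] = \sigma_\epsilon^2/S$ and finish with $\mathbb{E}[1/S] = 1/(\phi(\lambda)(n-2))$; your explicit justification of residual--input independence via joint Gaussianity and $c - a\phi = 0$ is a small addition to what the paper merely asserts.
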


\begin{proof}
For a new test point $(x_{\lambda,j}^{\mathrm{new}}, v_j^{\mathrm{new}})$ independent of $\mathcal{D}^{\mathrm{train}}$:
\begin{equation}
    v_j^{\mathrm{new}} = a(\lambda) x_{\lambda,j}^{\mathrm{new}} + \epsilon_j^{\mathrm{new}}(\lambda)
\end{equation}

The test loss (conditional on $\mathcal{D}^{\mathrm{train}}$) is:
\begin{align}
    L_{\mathrm{test},j}(\lambda) &= \mathbb{E}_{\mathrm{new}}[(v_j^{\mathrm{new}} - \hat{a} x_{\lambda,j}^{\mathrm{new}})^2 \mid \mathcal{D}^{\mathrm{train}}] \\
    &= \mathbb{E}_{\mathrm{new}}[((a(\lambda) - \hat{a}) x_{\lambda,j}^{\mathrm{new}} + \epsilon_j^{\mathrm{new}}(\lambda))^2 \mid \mathcal{D}^{\mathrm{train}}]
\end{align}

Since $x_{\lambda,j}^{\mathrm{new}} \perp \epsilon_j^{\mathrm{new}}(\lambda)$ and both are centered:
\begin{equation}
    L_{\mathrm{test},j}(\lambda) = (\hat{a} - a(\lambda))^2 \cdot \phi(\lambda) + \sigma_\epsilon^2(\lambda)
\end{equation}

Taking expectations over $\mathcal{D}^{\mathrm{train}}$:
\begin{equation}
    \mathbb{E}[L_{\mathrm{test},j}(\lambda)] = \phi(\lambda) \cdot \mathbb{E}[(\hat{a} - a(\lambda))^2] + \sigma_\epsilon^2(\lambda)
    \label{eq:test_loss_decomp}
\end{equation}

We now compute $\mathbb{E}[(\hat{a} - a(\lambda))^2]$. Conditionally on $(x_{\lambda,j}^{(i)})_{i=1}^n$, the numerator $\sum_i \epsilon_j^{(i)}(\lambda) x_{\lambda,j}^{(i)}$ is Gaussian with mean 0 and variance:
\begin{equation}
    \mathrm{Var}\left(\sum_i \epsilon_j^{(i)}(\lambda) x_{\lambda,j}^{(i)} \mid X\right) = \sum_i (x_{\lambda,j}^{(i)})^2 \cdot \sigma_\epsilon^2(\lambda) = S \cdot \sigma_\epsilon^2(\lambda)
\end{equation}

Therefore:
\begin{equation}
    \mathbb{E}[(\hat{a} - a(\lambda))^2 \mid X] = \frac{\sigma_\epsilon^2(\lambda) \cdot S}{S^2} = \frac{\sigma_\epsilon^2(\lambda)}{S}
\end{equation}

Taking expectations over $X$ and using \eqref{eq:expectation_inv_S}:
\begin{equation}
    \mathbb{E}[(\hat{a} - a(\lambda))^2] = \sigma_\epsilon^2(\lambda) \cdot \mathbb{E}\left[\frac{1}{S}\right] = \frac{\sigma_\epsilon^2(\lambda)}{\phi(\lambda)(n-2)}
\end{equation}

Substituting into \eqref{eq:test_loss_decomp}:
\begin{equation}
    \mathbb{E}[L_{\mathrm{test},j}(\lambda)] = \phi(\lambda) \cdot \frac{\sigma_\epsilon^2(\lambda)}{\phi(\lambda)(n-2)} + \sigma_\epsilon^2(\lambda) = \sigma_\epsilon^2(\lambda) \cdot \frac{n-1}{n-2}
\end{equation}
\end{proof}

\subsubsection{From Gap to $G_n^{\mathrm{train}}$}

\begin{lemma}[Expected gap per coordinate]
\label{lem:gap_coord}
For a single coordinate $j$:
\begin{equation}
    \mathbb{E}[\Delta_j(\lambda)] \triangleq \mathbb{E}[L_{\mathrm{test},j}(\lambda)] - \mathbb{E}[L_{\mathrm{train},j}(\lambda)] = \sigma_\epsilon^2(\lambda) \cdot \frac{2(n-1)}{n(n-2)}
\end{equation}
\end{lemma}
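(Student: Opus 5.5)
The gap identity follows by subtracting the two preceding lemmas, so the plan is to combine Lemma~\ref{lem:test_loss} and Lemma~\ref{lem:train_loss} for a fixed coordinate $j$. Both are stated under the same setting: isotropic Gaussian, OLS without intercept, $n>2$ training samples. Their expectations are taken over the same randomness, namely $\mathcal{D}^{\mathrm{train}}$, and for the test loss also over a fresh independent point. By linearity of expectation,
\begin{equation}
\mathbb{E}[\Delta_j(\lambda)] = \sigma_\epsilon^2(\lambda)\left(\frac{n-1}{n-2} - \frac{n-1}{n}\right).
\end{equation}
The remaining step is elementary algebra. Factor out $(n-1)$ and bring the bracket to the common denominator $n(n-2)$:
\begin{equation}
\frac{1}{n-2}-\frac{1}{n} = \frac{n-(n-2)}{n(n-2)} = \frac{2}{n(n-2)}.
\end{equation}
This gives the claimed $\sigma_\epsilon^2(\lambda)\cdot \frac{2(n-1)}{n(n-2)}$.

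The only point that needs care is that the two lemmas use matching normalizations. $L_{\mathrm{train},j}$ is the per-sample average, i.e.\ RSS divided by $n$, and $L_{\mathrm{test},j}$ is the expected squared error on one fresh point. Both are per-sample quantities, so subtracting them is legitimate. Note also that $\mathbb{E}[1/\chi^2_n]=1/(n-2)$ is finite only for $n>2$, which is exactly the hypothesis already in force.

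As a consistency check, and as a bridge to Theorem~\ref{thm:peak}, the result agrees with \eqref{eq:gap_simple}. In the Gaussian OLS model, $v^*$ is linear, so $E^{\mathrm{test}}$ and $E^{\mathrm{train}}$ are both $\phi(\lambda)(\hat a-a)^2$ up to the empirical versus population variance of $x$. Taking expectations of the decomposition \eqref{eq:train_decomp} then links the per-coordinate gap to $2\,\mathbb{E}[G^{\mathrm{train}}_{n,j}]$. Summing over the $d$ i.i.d.\ coordinates via Lemma~\ref{lem:irr_var} and halving yields the theorem's $\sigma^2_{\mathrm{irr}}(\lambda)\frac{n-1}{n(n-2)}$.

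I do not expect a genuine obstacle in the lemma itself. The subtle part lies downstream, in justifying the identification of the gap with $2G_n^{\mathrm{train}}$, because $E_n^{\mathrm{train}}$ and $E^{\mathrm{pop}}$ differ in expectation here.
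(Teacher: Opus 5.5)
Your proof is correct and is exactly the paper's argument: subtract Lemma~\ref{lem:train_loss} from Lemma~\ref{lem:test_loss}, factor out $(n-1)$, and use $\frac{1}{n-2}-\frac{1}{n}=\frac{2}{n(n-2)}$. Your closing caveat is well founded and sits in some tension with your ``consistency check.'' In this OLS model $G_{n,j}^{\mathrm{train}}=(\hat a-a)^2 S/n$, so $\mathbb{E}[G_{n,j}^{\mathrm{train}}]=\sigma_\epsilon^2(\lambda)/n$, while $\mathbb{E}[E_n^{\mathrm{train}}]$ is $\sigma_\epsilon^2(\lambda)/n$ per coordinate and the population error is $\sigma_\epsilon^2(\lambda)/(n-2)$; the expected gap is therefore not exactly $2\,\mathbb{E}[G_{n,j}^{\mathrm{train}}]$, though none of this affects the lemma itself.
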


\begin{proof}
\begin{align}
    \mathbb{E}[\Delta_j(\lambda)] &= \sigma_\epsilon^2(\lambda) \cdot \frac{n-1}{n-2} - \sigma_\epsilon^2(\lambda) \cdot \frac{n-1}{n} \\
    &= \sigma_\epsilon^2(\lambda) (n-1) \left(\frac{1}{n-2} - \frac{1}{n}\right) \\
    &= \sigma_\epsilon^2(\lambda) (n-1) \cdot \frac{2}{n(n-2)} = \sigma_\epsilon^2(\lambda) \cdot \frac{2(n-1)}{n(n-2)}
\end{align}
\end{proof}

We now connect this gap to $G_n^{\mathrm{train}}(\lambda)$. From the loss decomposition \eqref{eq:train_decomp} in Section~\ref{sec:loss_decomp}:
\begin{equation}
    L^{\mathrm{train}}(\lambda) = E_n^{\mathrm{train}}(\lambda) + \hat{\sigma}_n^2(\lambda) - 2G_n^{\mathrm{train}}(\lambda)
\end{equation}

For OLS on Gaussian data, Assumptions~\ref{ass:uniform} and \ref{ass:representative} hold in expectation:
\begin{itemize}
    \item The OLS estimator is unbiased, so $\mathbb{E}[E_n^{\mathrm{train}}(\lambda)] = \mathbb{E}[E^{\mathrm{test}}(\lambda)]$
    \item By the law of large numbers, $\mathbb{E}[\hat{\sigma}_n^2(\lambda)] = \sigma_{\mathrm{irr}}^2(\lambda)$
\end{itemize}

Similarly, for test data, Proposition~\ref{prop:asymmetry} gives $\mathbb{E}[G_m^{\mathrm{test}}(\lambda)] = 0$, so:
\begin{equation}
    \mathbb{E}[L^{\mathrm{test}}(\lambda)] = \mathbb{E}[E^{\mathrm{test}}(\lambda)] + \sigma_{\mathrm{irr}}^2(\lambda)
\end{equation}

Taking the difference:
\begin{equation}
    \mathbb{E}[\Delta(\lambda)] = \mathbb{E}[L^{\mathrm{test}}(\lambda) - L^{\mathrm{train}}(\lambda)] = 2\mathbb{E}[G_n^{\mathrm{train}}(\lambda)]
    \label{eq:delta_G_relation}
\end{equation}

\subsubsection{Main Result}

\begin{proof}[Proof of Theorem~\ref{thm:peak}]
From Lemma~\ref{lem:gap_coord} and the relation \eqref{eq:delta_G_relation}:
\begin{equation}
    \mathbb{E}[G_{n,j}^{\mathrm{train}}(\lambda)] = \frac{1}{2}\mathbb{E}[\Delta_j(\lambda)] = \sigma_\epsilon^2(\lambda) \cdot \frac{n-1}{n(n-2)}
\end{equation}

Since the $d$ coordinates are independent:
\begin{equation}
    \mathbb{E}[G_n^{\mathrm{train}}(\lambda)] = \sum_{j=1}^d \mathbb{E}[G_{n,j}^{\mathrm{train}}(\lambda)] = d \cdot \sigma_\epsilon^2(\lambda) \cdot \frac{n-1}{n(n-2)} = \sigma^2_{\mathrm{irr}}(\lambda) \cdot \frac{n-1}{n(n-2)}
\end{equation}
\end{proof}

\subsection{Proof of Corollary~\ref{cor:peak_location}: Peak at minimal linear information)}

\begin{proof}
Since $\frac{n-1}{n(n-2)} > 0$ for $n > 2$, maximizing $\mathbb{E}[G_n^{\mathrm{train}}(\lambda)]$ is equivalent to maximizing $\sigma_{\mathrm{irr}}^2(\lambda)$. From Theorem~\ref{thm:peak}:
\begin{equation}
    \sigma_{\mathrm{irr}}^2(\lambda) = d\left(\sigma_0^2 + \sigma_1^2 - \frac{c(\lambda)^2}{\phi(\lambda)}\right)
\end{equation}
Since $\phi(\lambda) > 0$, this is maximized when $c(\lambda) = 0$. Solving $c(\lambda) = \lambda\sigma_1^2 - (1-\lambda)\sigma_0^2 = 0$ gives $\lambda^* = \sigma_0^2/(\sigma_0^2 + \sigma_1^2)$.

In the isotropic case, $\|C(\lambda)\|_F^2 = d \cdot c(\lambda)^2$, so $\lambda^*$ coincides with $\lambda_F^*$ from Proposition~\ref{prop:C_min}.
\end{proof}

\subsection{Proof of Corollary~\ref{cor:boundary}: Boundary behavior)}

\begin{proof}
At $\lambda = 0$: $c(0)^2/\phi(0) = \sigma_0^4/\sigma_0^2 = \sigma_0^2$. At $\lambda = 1$: $c(1)^2/\phi(1) = \sigma_1^4/\sigma_1^2 = \sigma_1^2$. At $\lambda^*$: $c(\lambda^*) = 0$, so $c(\lambda^*)^2/\phi(\lambda^*) = 0$. 

Since $\sigma_{\mathrm{irr}}^2(\lambda) = d(\sigma_0^2 + \sigma_1^2 - c(\lambda)^2/\phi(\lambda))$, it is minimized when $c(\lambda)^2/\phi(\lambda)$ is maximized, which occurs at the boundaries.
\end{proof}

\begin{corollary}[Boundary and peak values]
\label{cor:boundary_values}
\begin{align}
    \sigma^2_{\mathrm{irr}}(0) &= d\sigma_1^2 \\
    \sigma^2_{\mathrm{irr}}(1) &= d\sigma_0^2 \\
    \sigma^2_{\mathrm{irr}}(\lambda^*) &= d(\sigma_0^2 + \sigma_1^2)
\end{align}
When $\sigma_0 = \sigma_1$, we have $\lambda^* = 1/2$ and 
$\sigma^2_{\mathrm{irr}}(\lambda^*) = 2\sigma^2_{\mathrm{irr}}(0) = 2\sigma^2_{\mathrm{irr}}(1)$.
\end{corollary}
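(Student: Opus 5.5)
The plan is to evaluate the closed form for $\sigma^2_{\mathrm{irr}}(\lambda)$ from Lemma~\ref{lem:irr_var} (equivalently Theorem~\ref{thm:peak}) at the three points $\lambda\in\{0,1,\lambda^*\}$. The work reduces to computing the ratio $c(\lambda)^2/\phi(\lambda)$ there. Most of these values already appear in the proof of Corollary~\ref{cor:boundary}, so the argument is a short substitution.

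\textbf{Step 1: $\lambda=0$.} Here $\phi(0)=\sigma_0^2$ and $c(0)=-\sigma_0^2$, so $c(0)^2/\phi(0)=\sigma_0^2$. Hence $\sigma^2_{\mathrm{irr}}(0)=d(\sigma_0^2+\sigma_1^2-\sigma_0^2)=d\sigma_1^2$. This is consistent with intuition: $X_0$ is observed exactly, and the unpredictable part of $V$ is $X_1$.

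\textbf{Step 2: $\lambda=1$.} Symmetrically, $\phi(1)=\sigma_1^2$ and $c(1)=\sigma_1^2$, so $c(1)^2/\phi(1)=\sigma_1^2$. This gives $\sigma^2_{\mathrm{irr}}(1)=d\sigma_0^2$.

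\textbf{Step 3: $\lambda=\lambda^*$.} By Corollary~\ref{cor:peak_location}, $c(\lambda^*)=0$. We also need $\phi(\lambda^*)>0$, which holds because $\sigma_0,\sigma_1>0$ and $(1-\lambda)$, $\lambda$ cannot vanish simultaneously. The ratio therefore vanishes, and $\sigma^2_{\mathrm{irr}}(\lambda^*)=d(\sigma_0^2+\sigma_1^2)$.

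\textbf{Step 4: the case $\sigma_0=\sigma_1=:\sigma$.} Substituting into $\lambda^*=\sigma_0^2/(\sigma_0^2+\sigma_1^2)$ gives $\lambda^*=1/2$. Steps 1–3 then give $\sigma^2_{\mathrm{irr}}(0)=\sigma^2_{\mathrm{irr}}(1)=d\sigma^2$ and $\sigma^2_{\mathrm{irr}}(1/2)=2d\sigma^2$, which is the claimed factor of two.

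No step is a real obstacle. The only point needing care is the positivity of $\phi$ at $\lambda^*$, which justifies setting the ratio to zero; it follows from $\sigma_0,\sigma_1>0$ as noted in Step 3.
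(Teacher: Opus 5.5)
Your proposal is correct and follows the paper's proof essentially verbatim: substitute $\lambda\in\{0,1,\lambda^*\}$ into the closed form of Lemma~\ref{lem:irr_var} and compute $c(\lambda)^2/\phi(\lambda)$ at each point. Your explicit check that $\phi(\lambda^*)>0$ and your treatment of the case $\sigma_0=\sigma_1$, which the paper leaves implicit, are small but welcome additions.
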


\begin{proof}
At $\lambda = 0$: $c(0) = -\sigma_0^2$, $\phi(0) = \sigma_0^2$, so 
$c(0)^2/\phi(0) = \sigma_0^2$ and $\sigma^2_{\mathrm{irr}}(0) = d(\sigma_0^2 + \sigma_1^2 - \sigma_0^2) = d\sigma_1^2$.

At $\lambda = 1$: $c(1) = \sigma_1^2$, $\phi(1) = \sigma_1^2$, so 
$c(1)^2/\phi(1) = \sigma_1^2$ and $\sigma^2_{\mathrm{irr}}(1) = d(\sigma_0^2 + \sigma_1^2 - \sigma_1^2) = d\sigma_0^2$.

At $\lambda^*$: $c(\lambda^*) = 0$, so $\sigma^2_{\mathrm{irr}}(\lambda^*) = d(\sigma_0^2 + \sigma_1^2)$.
\end{proof}

\begin{corollary}[Asymptotics]
\label{cor:asymptotics}
For large $n$:
\begin{equation}
    \mathbb{E}[G_n^{\mathrm{train}}(\lambda)] \approx \frac{\sigma^2_{\mathrm{irr}}(\lambda)}{n}
\end{equation}
\end{corollary}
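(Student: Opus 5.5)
The plan is to read the asymptotic statement directly off the exact finite-sample formula of Theorem~\ref{thm:peak}. That theorem gives
\begin{equation*}
\mathbb{E}[G_n^{\mathrm{train}}(\lambda)] = \sigma_{\mathrm{irr}}^2(\lambda)\cdot\frac{n-1}{n(n-2)}.
\end{equation*}
The prefactor $\sigma_{\mathrm{irr}}^2(\lambda)$ does not depend on $n$: by Lemma~\ref{lem:irr_var}, $\sigma_{\mathrm{irr}}^2(\lambda) = d\bigl(\sigma_0^2+\sigma_1^2 - c(\lambda)^2/\phi(\lambda)\bigr)$ is determined by $(\sigma_0,\sigma_1,\lambda,d)$ alone. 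So the whole claim reduces to the scalar behaviour of the factor $\frac{n-1}{n(n-2)}$ as $n\to\infty$.

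The key step is to show that $n\cdot\frac{n-1}{n(n-2)} = \frac{n-1}{n-2} = 1 + \frac{1}{n-2} \to 1$. This gives
\begin{equation*}
\mathbb{E}[G_n^{\mathrm{train}}(\lambda)] = \frac{\sigma_{\mathrm{irr}}^2(\lambda)}{n}\Bigl(1+\frac{1}{n-2}\Bigr) = \frac{\sigma_{\mathrm{irr}}^2(\lambda)}{n} + O\!\left(\frac{\sigma_{\mathrm{irr}}^2(\lambda)}{n^2}\right).
\end{equation*}
I will interpret ``$\approx$'' in this precise sense, as asymptotic equivalence with relative error $1/(n-2)$.

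One refinement is available. Since $\sigma_{\mathrm{irr}}^2(\lambda)\le d(\sigma_0^2+\sigma_1^2)$ by Corollary~\ref{cor:boundary_values}, the $O(n^{-2})$ remainder is uniform in $\lambda\in[0,1]$. The approximation therefore holds uniformly along the interpolation path, and the bell shape of Corollary~\ref{cor:peak_location} is preserved at leading order.

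There is no real obstacle here; the only point needing care is the scope of the statement. The computation above is valid only within the isotropic Gaussian/OLS setting of Theorem~\ref{thm:peak}, which requires $n>2$ for $\mathbb{E}[1/\chi^2_n]$ to be finite. I will state this restriction explicitly and note that the $1/n$ rate is the usual parametric rate, consistent with $p=1$ fitted parameter per coordinate.
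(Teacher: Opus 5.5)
Your route is the paper's route. The paper gives no separate proof of this corollary; it is just the large-$n$ reading of the factor $\frac{n-1}{n(n-2)}$ in Theorem~\ref{thm:peak}, and your leading-order conclusion is right. The refinement you add is not right, because the exact formula you expand is itself wrong at order $n^{-2}$.

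The proof of Theorem~\ref{thm:peak} computes $\mathbb{E}[\Delta]$ correctly. It then converts this into $2\,\mathbb{E}[G_n^{\mathrm{train}}]$ by asserting $\mathbb{E}[E_n^{\mathrm{train}}]=\mathbb{E}[E^{\mathrm{pop}}]$, on the grounds that OLS is unbiased. That step fails. Per coordinate, $E_n^{\mathrm{train}}=(\hat a-a)^2S/n$ has expectation $\sigma_\epsilon^2(\lambda)/n$. By contrast, $E^{\mathrm{pop}}=(\hat a-a)^2\phi(\lambda)$ has expectation $\sigma_\epsilon^2(\lambda)/(n-2)$.

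If you compute $G$ directly, you never need that step. Use $v_\theta-v^*=(\hat a-a)x$ and $\hat a-a=\sum_i x_{\lambda,j}^{(i)}\epsilon_j^{(i)}/S$. Conditionally on the design, the sum below is $\mathcal{N}(0,S\sigma_\epsilon^2(\lambda))$, so one coordinate contributes
\begin{equation*}
G_{n,j}^{\mathrm{train}}(\lambda)=\frac{1}{nS}\Bigl(\sum_{i=1}^n x_{\lambda,j}^{(i)}\epsilon_j^{(i)}(\lambda)\Bigr)^2,
\qquad
\mathbb{E}\bigl[G_{n,j}^{\mathrm{train}}(\lambda)\,\big|\,x_{\lambda,j}^{(1)},\dots,x_{\lambda,j}^{(n)}\bigr]=\frac{\sigma_\epsilon^2(\lambda)}{n}.
\end{equation*}
Summing over coordinates gives $\mathbb{E}[G_n^{\mathrm{train}}(\lambda)]=\sigma_{\mathrm{irr}}^2(\lambda)/n$ \emph{exactly}, for every $n\ge 1$. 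Equivalently, $2\,\mathbb{E}[G_n^{\mathrm{train}}]$ is the classical in-sample optimism $2p\sigma_\epsilon^2/n$ per coordinate with $p=1$, which your closing remark anticipates.

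The expression $\sigma_{\mathrm{irr}}^2\frac{n-1}{n(n-2)}$ is $\frac12\mathbb{E}[\Delta]$. It exceeds $\mathbb{E}[G_n^{\mathrm{train}}]$ by $\frac12\bigl(\mathbb{E}[E^{\mathrm{pop}}]-\mathbb{E}[E_n^{\mathrm{train}}]\bigr)=\sigma_{\mathrm{irr}}^2/(n(n-2))$. Three of your claims are therefore inherited artifacts:
\begin{itemize}
\item the identity $\mathbb{E}[G_n^{\mathrm{train}}]=\frac{\sigma_{\mathrm{irr}}^2}{n}\bigl(1+\frac{1}{n-2}\bigr)$;
\item the ``relative error $1/(n-2)$'', which is in fact zero;
\item the requirement $n>2$, which is only needed on the test-loss side through $\mathbb{E}[1/S]$.
\end{itemize}
The corollary you set out to prove does hold, in fact as an identity, and your uniformity-in-$\lambda$ remark becomes trivial.
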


\subsection{Proof of Proposition \ref{prop:statistics}: Shared statistics or $r$ and $\epsilon$}

\begin{proof}
For $r$: $(A(\lambda), b(\lambda))$ minimize $\mathbb{E}[\|v^* - Ax - b\|^2]$. The first-order conditions yield $\mathbb{E}[r] = 0$ and $\mathbb{E}[r \cdot X_\lambda^\top] = 0$.

For $\epsilon$: By the definition of conditional expectation, $\mathbb{E}[\epsilon | X_\lambda] = 0$, which implies $\mathbb{E}[\epsilon] = 0$ and $\mathbb{E}[\epsilon \cdot X_\lambda^\top] = 0$.
\end{proof}

\section{Ablations details}
\label{app:ablations}

\subsection{Datasets}

\subsubsection{MAESTRO v3}

MAESTRO v3 (MIDI and Audio Edited for Synchronous TRacks and organization) \citep{hawthorne2019enabling} contains approximately 200 hours of classical piano performances recorded during international piano competitions. The dataset comprises 1,282 compositions divided into train (967 pieces, 154h), validation (137 pieces, 20h), and test (178 pieces, 26h) partitions, representing a 76\%/11\%/13\% split.

\paragraph{Technical characteristics.}
Audio is provided as uncompressed WAV, 16-bit PCM at 44.1 kHz (some tracks at 48 kHz). The total size is approximately 120 GB. The repertoire spans classical music from the baroque to contemporary periods (Bach, Mozart, Beethoven, Chopin, Liszt, Debussy, etc.), with homogeneous professional studio recording quality.

\paragraph{Preprocessing.}
Audio files are resampled to 44.1 kHz mono and segmented into non-overlapping 5 second chunks; partial chunks shorter than 5 seconds are discarded. Each chunk is encoded using Music2Latent \citep{pasini2024music2latent}, yielding latents of dimension $64 \times 50$ (64 channels at 10 Hz temporal resolution). We apply z-score normalization per channel, with statistics computed on the training set and applied to all splits to prevent data leakage.

\paragraph{Split methodology.}
The split ensures no composition appears in multiple subsets, even when performed by different pianists. This prevents data leakage at the composition level and ensures train and test sets share the same musical distribution, satisfying Assumption~\ref{ass:representative}. The homogeneity of the dataset (classical piano only) makes it well-suited for studying memorization, as the concentrated distribution leaves stronger per-sample imprints.

\subsubsection{MTG-Jamendo}

MTG-Jamendo \citep{bogdanov2019jamendo} contains over 55,000 tracks representing approximately 3,777 hours of music. The dataset covers approximately 16,000 unique artists and 18,000 albums from more than 150 countries. Tracks come from the Jamendo platform under Creative Commons licenses. We use the official \textit{genre-split-0} with a 60\%/20\%/20\% train/validation/test partition.

\paragraph{Technical characteristics.}
Audio is provided as MP3 at 320 kbps, with a variable sample rate (mainly 44.1 kHz). The total size is approximately 500 GB. The dataset spans productions from 2005 to 2020 across all contemporary genres (electronic, rock, pop, jazz, hip-hop, folk, metal, etc.). Unlike MAESTRO, the production quality varies from home-studio to professional recordings. The dataset includes hierarchical multi-label annotations: 87 genres, 40 instruments, and 56 mood/theme tags.

\paragraph{Preprocessing.}
Same pipeline as MAESTRO: resampling to 44.1 kHz mono, segmentation into 5-second chunks, Music2Latent encoding to $64 \times 50$ latents, and z-score normalization per channel with training set statistics.

\paragraph{Split methodology.}
The split provided uses random sampling with artist stratification only: no artist appears in multiple subsets, ensuring the model is evaluated on artists unseen during training. To satisfy Assumption~\ref{ass:representative}, we performed subsampling on the train and test sets with genre stratification, ensuring balanced genre proportions across splits.

\subsubsection{Free Music Archive (FMA)}

FMA Large \citep{defferrard2017fma} contains 106,574 clips of 30 seconds each, representing approximately 883 hours of music under a Creative Commons license. The dataset is organized into subsets of increasing size; we use FMA Large for maximum diversity, which includes 161 different genres.

\paragraph{Technical characteristics.}
Audio is provided as MP3 at a constant 320 kbps, with a variable sample rate (mainly 44.1 kHz). The total size is approximately 93 GB. Clips are central excerpts from complete tracks, spanning productions from 2006 to 2017. Quality varies across independent productions but is generally good.

\paragraph{Preprocessing.}
Same pipeline as MAESTRO: resampling to 44.1 kHz mono, segmentation into 5-second chunks, Music2Latent encoding to $64 \times 50$ latents, and z-score normalization per channel with training set statistics.

\paragraph{Split methodology.}
The official split uses genre stratification with artist separation: (1) genre proportions are maintained across train/validation/test, and (2) no artist appears in multiple sets. This controlled, genre-balanced design contrasts with MTG-Jamendo's natural distribution and satisfies Assumption~\ref{ass:representative}.

\subsubsection{CelebA}

CelebA (CelebFaces Attributes) \citep{liu2015faceattributes} contains 202,599 celebrity face images, each annotated with 40 binary facial attributes (e.g., Male, Smiling, Eyeglasses, Young). We use the official split from Hugging Face (\texttt{flwrlabs/celeba}): 162,770 training, 19,867 validation, and 19,962 test images.

\paragraph{Technical characteristics.}
Original images are JPEG at approximately 178$\times$218 pixels. The dataset provides binary labels for 40 attributes covering facial features, accessories, and demographics.

\paragraph{Preprocessing.}
Images are resized to 256$\times$256 using bilinear interpolation followed by center cropping. Pixel values are normalized to $[-1, 1]$. Each image is encoded using the Stable Diffusion VAE (\texttt{sd-vae-ft-mse}), yielding latents of dimension $4 \times 32 \times 32$ (4 channels at 32$\times$32 spatial resolution).

\paragraph{Split methodology.}
The official split uses random partitioning of images; the same identity may appear in both the train and test sets. This does not violate Assumption~\ref{ass:representative}, which requires the train and test sets to follow the same distribution $p_1$; both are random samples from the same population. However, identity leakage may amplify the membership signal compared to stricter identity-based splits, as the model could encode identity-specific features that are shared across sets.

\subsection{Latents}

\subsubsection{Music2Latent}

Music2Latent \citep{pasini2024music2latent} is a consistency autoencoder for audio compression, designed for efficient generative modeling and Music Information Retrieval (MIR) tasks. Unlike multi-stage approaches or slow iterative sampling methods, Music2Latent achieves high-fidelity single-step reconstruction through end-to-end training with a single consistency loss.

\paragraph{Architecture.}
The model consists of three components: (1) an encoder that downsamples complex-valued STFT spectrograms into a sequence of 64-dimensional latent vectors, using $\tanh$ activation to constrain representations to $[-1, 1]$; (2) a decoder that upsamples latent vectors with cross connections to the consistency model; and (3) a consistency model based on the NCSN++ UNet architecture that reconstructs the original spectrogram. Key innovations include frequency-wise self-attention to capture long-range frequency dependencies and adaptive frequency scaling to handle varying value distributions across frequencies.

\paragraph{Compression characteristics.}
Audio at 44.1 kHz is compressed to approximately 10 Hz temporal resolution with 64 channels, achieving a 4096$\times$ compression ratio. For our 5-second audio chunks, this yields latent representations of dimension $64 \times 50$.

\paragraph{Relevance to our assumptions.}
Although Music2Latent is not a VAE and does not use explicit KL regularization, the $\tanh$ activation constrains latent values to a bounded range $[-1, 1]$. Combined with the high compression ratio, this encourages approximately Gaussian marginal distributions in the latent space, as verified empirically in Table~\ref{tab:gaussianity}. The bounded symmetric activation discourages heavy tails and extreme correlations, supporting the approximate isotropy assumed in our theoretical analysis.

\subsubsection{Stable Audio VAE}

The Stable Audio VAE \citep{evans2024stable} is the autoencoder component of Stable Audio Open, a text-to-audio generation system developed by Stability AI. Unlike Music2Latent, which uses consistency models, this is a traditional variational autoencoder with explicit KL regularization toward a Gaussian prior.

\paragraph{Architecture.}
The model uses a fully-convolutional architecture (AutoencoderOobleck) based on the Descript Audio Codec encoder and decoder. The encoder compresses stereo waveforms at 44.1 kHz through five convolutional blocks with strides convolutions for downsampling. The bottleneck is parameterized as a VAE with a latent size of 64 channels. The decoder mirrors the encoder structure using transposed strides convolutions for upsampling. All convolutions are weight normalized.

\paragraph{Training.}
The VAE is trained with three loss terms: (1) a reconstruction loss based on perceptually weighted multi-resolution STFT, handling stereo via mid-side and left-right representations; (2) an adversarial loss with feature matching using 5 convolutional discriminators; and (3) a KL divergence loss regularizing the latent distribution toward a standard Gaussian prior. Training was performed on approximately 486,000 audio recordings from Freesound and the Free Music Archive, all under Creative Commons licenses.

\paragraph{Compression characteristics.}
Audio at 44.1 kHz is compressed to a latent rate of 21.5 Hz with 64 channels. For our experiments, we convert audio to mono before encoding.

\paragraph{Relevance to our assumptions.}
The explicit KL regularization toward $\mathcal{N}(0, I)$ directly encourages the latent space to satisfy the Gaussian isotropic assumptions of our theoretical analysis. As shown in Table~\ref{tab:gaussianity}, the Stable Audio VAE latents exhibit low skewness ($\overline{|\gamma|} = 0.08$), low excess kurtosis ($\overline{|\kappa|} = 0.10$), and weak inter-dimensional correlations ($\overline{|\rho|} = 0.16$), confirming approximate Gaussianity and isotropy.

\subsubsection{Stable Diffusion VAE}

The Stable Diffusion VAE (\texttt{sd-vae-ft-mse}) \citep{blattmann2022retrieval} is the autoencoder component of the Stable Diffusion image generation system. We use the fine-tuned version released by Stability AI, which improves face reconstruction compared to the original model.

\paragraph{Architecture.}
The model is a KL-regularized autoencoder (kl-f8) with an 8$\times$ spatial downsampling factor. The encoder uses convolutional blocks with residual connections to compress images into a latent space with 4 channels. For 256$\times$256 input images, this yields latent representations of dimension $4 \times 32 \times 32$. The decoder mirrors the encoder structure using transposed convolutions for upsampling.

\paragraph{Training.}
The original kl-f8 autoencoder was trained on OpenImages with L1 reconstruction loss, LPIPS perceptual loss, and KL divergence regularization. The \texttt{ft-mse} variant was fine-tuned from this checkpoint on a 1:1 ratio of LAION-Aesthetics and LAION-Humans datasets for an additional 280k steps, with increased emphasis on MSE reconstruction (MSE + 0.1 $\times$ LPIPS). This fine-tuning improves reconstruction quality, particularly for human faces.

\paragraph{Compression characteristics.}
Images at 256$\times$256 pixels are compressed to $4 \times 32 \times 32$ latents, achieving a 48$\times$ compression ratio (from $256 \times 256 \times 3 = 196,608$ to $32 \times 32 \times 4 = 4,096$ values).

\paragraph{Relevance to our assumptions.}
Despite the KL regularization, the Stable Diffusion VAE latent space deviates significantly from the Gaussian isotropic assumptions. As shown in Table~\ref{tab:gaussianity}, CelebA latents encoded with this VAE exhibit high excess kurtosis ($\overline{|\kappa|} = 0.71$), indicating heavy-tailed marginal distributions and strong inter-dimensional correlations ($\overline{|\rho|} = 0.61$). These violations explain why the peak prediction $\lambda_F^*$ fails to match the observed peak for this configuration (Table~\ref{tab:ablations_summary}), while the bell-shaped curve still appears, confirming that the bell shape is universal but the closed-form peak location requires our assumptions to hold.

\subsection{Model Architectures}

We use two backbone architectures, a Transformer and a UNet, to verify that our findings are architecture-independent. Both are trained with the Rectified Flow objective \citep{liu2023flow,lipman2023flow}.

\subsubsection{Transformer (DiT)}

Our Transformer follows the Diffusion Transformer (DiT) architecture \citep{peebles2023scalablediffusionmodelstransformers} with several modifications for audio sequences.

\paragraph{Architecture.}
The input sequence $(B, C, T)$ is first transposed to $(B, T, C)$ and projected to the hidden dimension via a linear layer. Each Transformer block consists of:
\begin{itemize}[leftmargin=*, itemsep=0pt, topsep=2pt]
    \item \textbf{Attention}: Multi-head self-attention with Rotary Position Embeddings (RoPE) \citep{su2024roformer} and Flash Attention \citep{dao2022flashattention} for efficiency.
    \item \textbf{MLP}: Two-layer feedforward network with GELU activation (tanh approximation).
    \item \textbf{adaLN-Zero conditioning}: Adaptive Layer Normalization with six modulation parameters (scale, shift, and gate for both attention and MLP branches), initialized to zero for stable training.
\end{itemize}
Time conditioning uses sinusoidal embeddings processed through a two-layer MLP with SiLU activation. The final layer applies adaLN modulation followed by a linear projection back to the input dimension.

\paragraph{Configuration.}
For audio experiments, we use the 410M parameters configuration: hidden size 576, depth 24, 12 attention heads, and an MLP ratio of 4.0. Initialization follows Xavier uniform for linear layers, with zero initialization for all adaLN modulation layers and the final output projection.

\subsubsection{UNet}

We implement UNet architectures for both 1D audio latents and 2D image latents, sharing the same structural design.

\paragraph{Architecture.}
The UNet follows a symmetric encoder-decoder structure with skip connections:
\begin{itemize}[leftmargin=*, itemsep=0pt, topsep=2pt]
    \item \textbf{Encoder}: Sequence of ResBlocks at each resolution level, with strides convolutions for downsampling between levels.
    \item \textbf{Middle}: ResBlock $\rightarrow$ Self-Attention $\rightarrow$ ResBlock at the lowest resolution.
    \item \textbf{Decoder}: Sequence of ResBlocks with skip connections from the encoder, with nearest-neighbour upsampling followed by convolution between levels.
\end{itemize}

Each ResBlock consists of: GroupNorm $\rightarrow$ SiLU $\rightarrow$ Conv $\rightarrow$ time conditioning $\rightarrow$ GroupNorm $\rightarrow$ SiLU $\rightarrow$ Dropout $\rightarrow$ Conv, with a residual connection. Time conditioning injects the timestep via scale and shift modulation: $h \leftarrow h \cdot (1 + \text{scale}) + \text{shift}$, where scale and shift are produced by an MLP from the sinusoidal time embedding. Self-attention blocks use GroupNorm followed by multi-head attention. All output convolutions and attention projections are zero-initialized for stable training.

\paragraph{Configuration.}
For the medium configuration:
\begin{itemize}[leftmargin=*, itemsep=0pt, topsep=2pt]
    \item \textbf{UNet 1D} (audio, $64 \times T$ latents): base channels 192, channel multipliers $(1, 2, 4, 4)$, 2 ResBlocks per level, attention at levels 2--3, dropout 0.1.
    \item \textbf{UNet 2D} (images, $4 \times 32 \times 32$ latents): base channels 192, channel multipliers $(1, 2, 4, 4)$, 2 ResBlocks per level, attention at levels 1--3, no dropout.
\end{itemize}

\subsubsection{Training}

All models are trained with the AdamW optimizer ($\beta_1 = 0.9$, $\beta_2 = 0.999$), a learning rate of $10^{-4}$, mixed-precision (FP16/BF16), and gradient clipping set at 1.0. We use early stopping based on validation loss with a patience of 25 epochs. The batch size is 128 for Transformer models and 64 for UNet models.

\paragraph{Relevance to our analysis.}
As shown in Table~\ref{tab:ablations_summary}, both architectures yield nearly identical observed peak locations $\lambda_{\mathrm{obs}}$ on the same dataset, confirming that $\lambda_F^*$ depends on data geometry ($\Sigma_0$, $\Sigma_1$) rather than model architecture or capacity.

\section{Additional Metrics Analysis}
\label{app:metrics_analysis}

In the main text, we report the train-test gap using the mean reconstruction error. Our protocol evaluates each sample with $K=100$ independent noise realizations, yielding a distribution of reconstruction errors per sample rather than a single value. This enables the computation of richer statistics, including the median, quartiles, and standard deviation, which provide robustness cheques.

Here we examine these additional metrics to assess the robustness of our findings. Results are presented on MTG-Jamendo; similar patterns hold for MAESTRO v3 and FMA Large.

\subsection{Median and Quantile Metrics}

Figure \ref{fig:metrics_all} shows the normalized gap for the median and quartiles ($q^{0.25}$, $q^{0.75}$). All three metrics exhibit bell-shaped curves nearly identical to the mean, with peaks at $\lambda = 0.5$ and boundary values approaching zero.

This consistency across robust statistics confirms that the bell-shaped pattern is not driven by outliers. The membership signal is present throughout the distribution of reconstruction errors, not just in the tails. This robustness further supports our theoretical framework: the $\lambda$-dependent structure of the train-test gap is a fundamental property of the model, not a statistical artifact.

\subsection{Standard Deviation: An S-Shaped Pattern}
\label{app:std_analysis}

The standard deviation across the $K=100$ noise samples reveals a qualitatively different pattern. As shown in Figure \ref{fig:metrics_all}, instead of a bell curve, we observe an S-shaped curve:

\begin{itemize}[leftmargin=*]
    \item For $\lambda < 0.3$: the gap is negative, meaning training samples exhibit \emph{higher} variance in reconstruction error than test samples.
    \item For $\lambda > 0.3$: the gap becomes positive, meaning training samples exhibit \emph{lower} variance.
\end{itemize}

\paragraph{Interpretation.}
We hypothesise that this pattern reflects sample-specific attractors in the learned velocity field. At low $\lambda$ (high noise), training samples may either be captured by their learned attractor or missed entirely, producing high variance across noise realizations. Test samples, lacking specific attractors, consistently receive population-average predictions with lower variance. As $\lambda$ increases and samples approach the data manifold, training samples reliably reach their attractors (low variance), while test samples show more variable behavior.

This interpretation remains speculative; a theoretical characterization of higher-order statistics is left for future work.

\begin{figure}[H]
    \centering
    \includegraphics[width=0.75\linewidth]{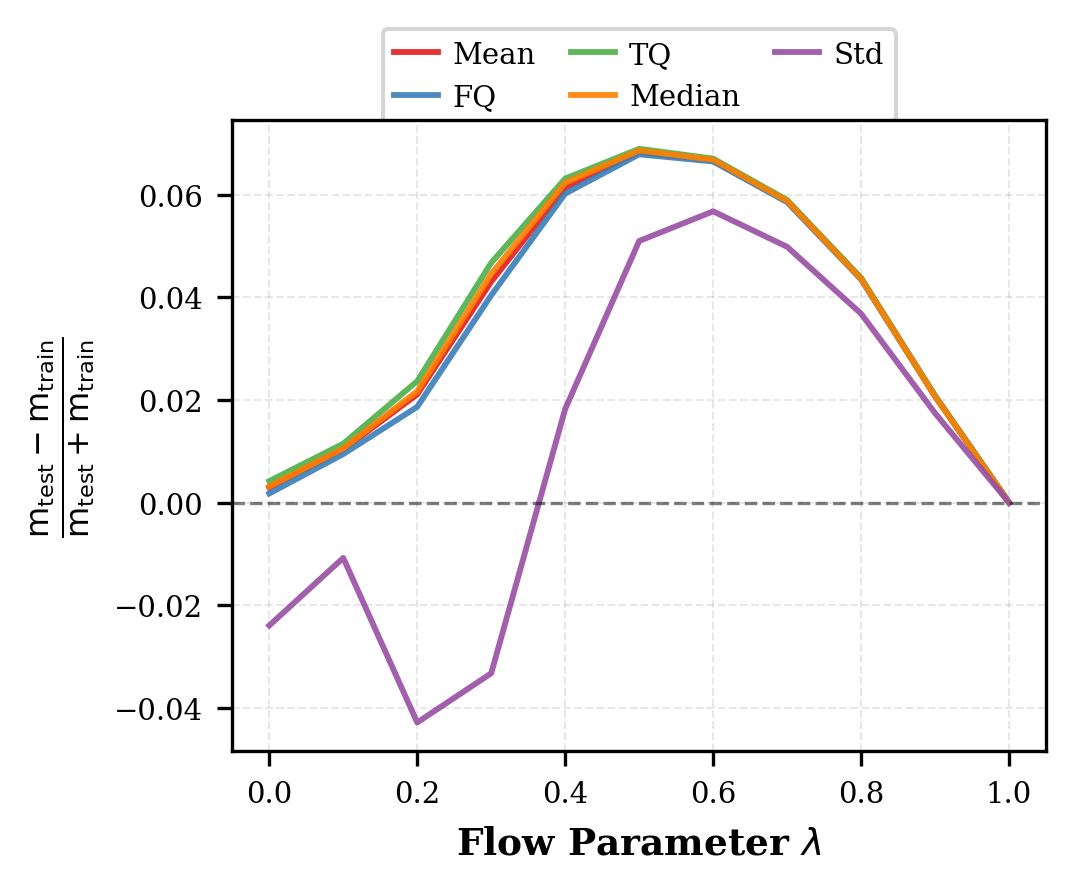}
    \caption{normalized gap for all metrics on MTG-Jamendo. Mean, median, and quartiles ($q^{0.25}$, $q^{0.75}$) exhibit consistent bell-shaped curves. Standard deviation ($\sigma$) shows an S-shaped pattern. Similar patterns are observed on MAESTRO v3 and FMA Large.}
    \label{fig:metrics_all}
\end{figure}

\section{Failure Modes and Relaxation of Assumptions}
\label{app:assumptions_study}

\subsection{Limits of Controlled Perturbations}

During the rebuttal period, we explored controlled transformations to test assumption boundaries: $z \mapsto \mathrm{sign}(z)|z|^p$ to modulate kurtosis and $z \mapsto (1-\alpha)z + \alpha \cdot \mathrm{mean}(z)$ to inject inter-dimension correlations. However, these transformations introduce auxiliary artifacts and perturb multiple statistics simultaneously, making clean isolation impossible; therefore, we do not rely on them. 
We instead rely on naturally distinct configurations (different datasets and encoders), each producing their own $(\Sigma_0, \Sigma_1)$ pairs and degrees of assumption violation (Table~\ref{tab:gaussianity}). The CelebA / SD VAE configuration, with $\overline{|\rho|} = 0.61$ and $\overline{|\kappa|} = 0.71$, serves as our primary natural test case.

Based on our theoretical analysis, we interpret the failure modes as follows. Non-Gaussianity introduces a nonlinear residual $r(x, \lambda)$ (Section~\ref{sec:heuristic}) that shifts the irreducible variance non-uniformly across $\lambda$, displacing the peak from $\lambda_F^*$. Anisotropy causes $\mathrm{tr}(\Sigma_1^2)$ in the denominator of Proposition~\ref{prop:C_min} to be dominated by off-diagonal entries, pushing $\lambda_F^*$ downward, consistent with the CelebA case where $\lambda_F^* = 0.45$ while $\lambda_{\mathrm{obs}} \in [0.6, 0.7]$.

In practice, we recommend that practitioners directly measure the bell-shaped curve on their dataset of interest: computing $\lambda_F^*$ requires only $O(d^2)$ trace estimations, and observing the empirical peak requires only forward passes at a grid of $\lambda$ values. Checking whether the two agree is both inexpensive and more informative than any synthetic perturbation experiment.

\subsection{Relaxation: From $\lambda_F^*$ to $\lambda_{\mathrm{irr}}$}

A natural alternative to the closed-form $\lambda_F^*$ is to numerically minimize 

\begin{equation}
    \sigma_{\mathrm{irr}}^2(\lambda) = \mathrm{tr}(\Sigma_V) - 
    \mathrm{tr}(C(\lambda)\Phi(\lambda)^{-1}C(\lambda)^\top)
\end{equation}
over $\lambda$, yielding a $\lambda_{\mathrm{irr}}$ that does not require the isotropy assumption. 

However, for the high-dimensional latents commonly considered ($d = 64 \times 50 = 3200$ for audio), reliably estimating and inverting $\Phi(\lambda) \in \mathbb{R}^{d \times d}$ from finite samples is, in practice, unstable, as audio chunks are temporally correlated and the effective sample size is much smaller than the nominal $n$. 

The closed-form $\lambda_F^*$, in contrast, depends only on the traces of products of $\Sigma_0$ and $\Sigma_1$, which can be estimated robustly in $O(d^2)$ time.

\section{Membership Inference Attack Details}
\label{app:mia_details}

\paragraph{Feature extraction.}
For each sample $x_1$, we compute reconstruction errors at each $\lambda \in \{0, 0.1, \ldots, 1.0\}$ using $K=100$ independent noise realizations, and extract the per-$\lambda$ mean, yielding an 11-dimensional feature vector.

\paragraph{Classifier and training.}
We train a small MLP (2 hidden layers, 64-32 units) on the $\lambda$-resolved features using binary cross-entropy and Adam ($\beta_1=0.9$, $\beta_2=0.999$), with early stopping on a held-out validation set. The architecture was selected via Bayesian optimization over depth, width, and training duration.

\paragraph{Dataset construction.}
We partition the generative model's training and held-out sets into two disjoint halves each, combining one half of each for MLP training and the other for evaluation, ensuring no sample appears in both. We use 1,000 samples per class for training and 500 for testing.

\paragraph{Results.}
Table~\ref{tab:mia_comparison} reports AUC and TPR@5\%FPR across all datasets and baselines. Figure~\ref{fig:mia_confusion} shows the confusion matrix at threshold 0.38.

\begin{table}[H]
\caption{MIA results across datasets. AUC with TPR@5\%FPR (\%) in parentheses.}
\label{tab:mia_comparison}
\begin{center}
\begin{small}
\begin{sc}

\begin{tabular}{lcccc}
\toprule
Method & MAESTRO v3 & MTG-Jamendo & FMA Large & CelebA \\
\midrule

Naive$_{\mathrm{RF}}$ 
    & 0.67 (14.1) & 0.57 (6.0) & 0.55 (4.8) & 0.58 (8.0) \\
    
SecMI$_{\mathrm{RF}}$ 
    & 0.72 (13.9) & 0.61 (11.0) & 0.59 (8.4) & 0.56 (4.3) \\
PIA$_{\mathrm{RF}}$   
    & 0.83 (36.5) & 0.64 (10.2) & 0.61 (9.3) & 0.62 (14.0) \\
\midrule
Ours 
    & \textbf{0.91} (56.7) & \textbf{0.72} (23.4) & \textbf{0.67} (19.0) 
    & \textbf{0.65} (15.0) \\
\bottomrule
\end{tabular}

\end{sc}
\end{small}
\end{center}
\end{table}

\begin{figure}[H]
    \centering
    \includegraphics[width=0.6\linewidth]{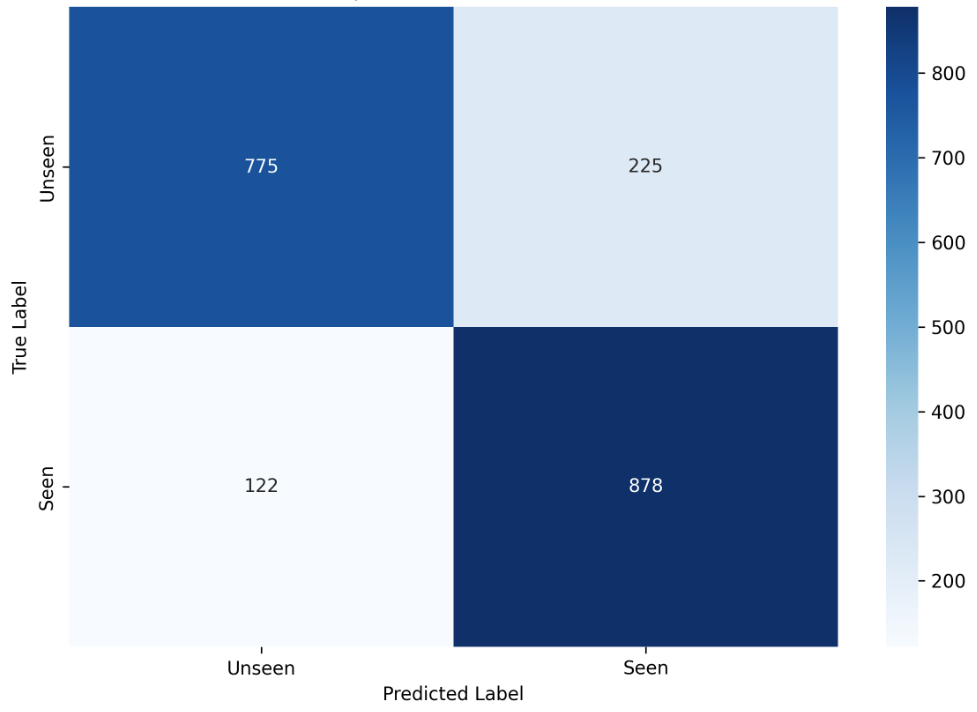}
    \caption{Confusion matrix on MAESTRO at threshold 0.38. The classifier correctly 
    identifies 82\% of members and 84\% of non-members.}
    \label{fig:mia_confusion}
\end{figure}

\section{Reflow: Preliminary Results}
\label{app:reflow_prelim}

The reflow procedure~\citep{liu2023flow} replaces the independent coupling $X_0 \perp \!\!\!\perp X_1$ with learned pairs obtained by integrating the trained velocity field forward from noise samples. This breaks the independence assumption underlying our theoretical analysis, and we conjecture it attenuates the membership signal by correlating the noise endpoint with the data.

\paragraph{Protocol.}
We train a single reflow step on MAESTRO v3, using the same Transformer architecture (410M parameters) and training hyperparameters as the baseline configuration (Section~\ref{sec:experiments}). The reflow pairs $(x_0, x_1)$ are obtained by integrating the baseline model forward from $x_0 \sim \mathcal{N}(0, \Sigma_0)$.

\paragraph{Results.}
Figure~\ref{fig:reflow_bell} shows the normalized train-test gap $\Delta_{\mathrm{norm}}(\lambda)$ for the reflow model alongside the baseline. The bell-shaped structure persists, confirming that the phenomenon is not specific to the independent coupling. However, the peak magnitude decreases substantially (from 0.09 to 0.01), and the curve exhibits a broader, flatter plateau rather than a sharp peak. The peak location remains near $\lambda_F^*$, consistent with the interpretation that the peak is governed by data geometry rather than the coupling procedure.

These results also suggest that reflow may offer a natural mitigation of membership leakage as a byproduct of its trajectory-straightening objective; though a thorough characterization is left for future work.

\begin{figure}[H]
    \centering
    \includegraphics[width=0.6\linewidth]{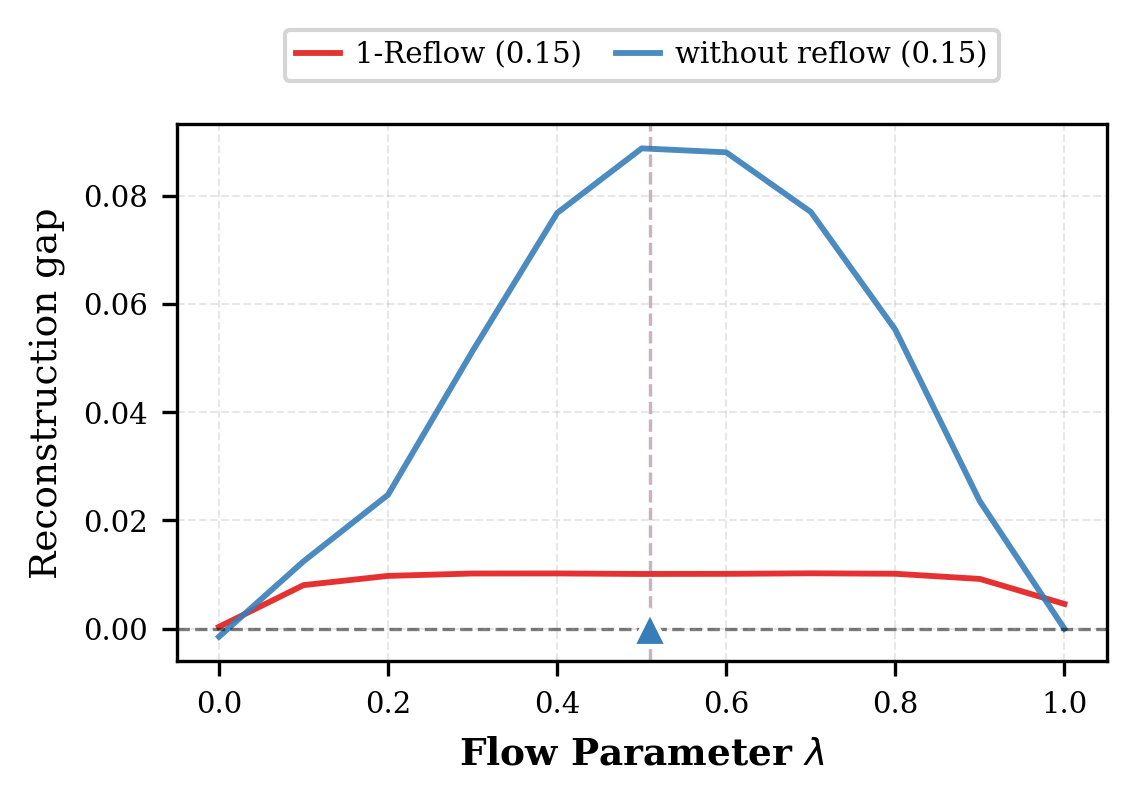}
    \caption{normalized train-test gap $\Delta_{\mathrm{norm}}(\lambda)$ for the baseline 
    and reflow models on MAESTRO v3. The bell shape persists under reflow but with a 
    substantially reduced magnitude and broader plateau.}
    \label{fig:reflow_bell}
\end{figure}

\end{document}